\documentclass{article} % For LaTeX2e
\usepackage{iclr2026_conference}
\usepackage{times}

% Optional math commands from https://github.com/goodfeli/dlbook_notation.
%%%%% NEW MATH DEFINITIONS %%%%%

\usepackage{amsmath,amsfonts,bm}

% Mark sections of captions for referring to divisions of figures

% Highlight a newly defined term

% Figure reference, lower-case.

% Figure reference, capital. For start of sentence

% Section reference, lower-case.

% Section reference, capital.

% Reference to two sections.

% Reference to three sections.

% Reference to an equation, lower-case.
\def\eqref#1{equation~\ref{#1}}
% Reference to an equation, upper case

% A raw reference to an equation---avoid using if possible

% Reference to a chapter, lower-case.

% Reference to an equation, upper case.

% Reference to a range of chapters

% Reference to an algorithm, lower-case.

% Reference to an algorithm, upper case.

% Reference to a part, lower case

% Reference to a part, upper case

\def\1{\bm{1}}

% Random variables

% rm is already a command, just don't name any random variables m

% Random vectors

\def\rve{{\mathbf{e}}}

\def\rvv{{\mathbf{v}}}

\def\rvx{{\mathbf{x}}}
\def\rvy{{\mathbf{y}}}

% Elements of random vectors

% Random matrices
\def\rmA{{\mathbf{A}}}

\def\rmQ{{\mathbf{Q}}}

\def\rmX{{\mathbf{X}}}
\def\rmY{{\mathbf{Y}}}

% Elements of random matrices

% Vectors

% Elements of vectors

% Matrix

% Tensor
\DeclareMathAlphabet{\mathsfit}{\encodingdefault}{\sfdefault}{m}{sl}
\SetMathAlphabet{\mathsfit}{bold}{\encodingdefault}{\sfdefault}{bx}{n}

% Graph

% Sets

% Don't use a set called E, because this would be the same as our symbol
% for expectation.

\def\sR{{\mathbb{R}}}

% Entries of a matrix

% entries of a tensor
% Same font as tensor, without \bm wrapper

\newcommand{\xb}{\mathbf{x}}
\newcommand{\qb}{\mathbf{q}}

\newcommand{\kb}{\mathbf{k}}
\newcommand{\pb}{\mathbf{p}}
\newcommand{\zb}{\mathbf{z}}

\newcommand{\psib}{\boldsymbol{\psi}}

\newcommand{\Wb}{\mathbf{W}}

% The true underlying data generating distribution

% The empirical distribution defined by the training set

% The model distribution

% Stochastic autoencoder distributions

 % Laplace distribution

\newcommand{\softmax}{\mathrm{softmax}}

% Wolfram Mathworld says $L^2$ is for function spaces and $\ell^2$ is for vectors
% But then they seem to use $L^2$ for vectors throughout the site, and so does
% wikipedia.

 % See usage in notation.tex. Chosen to match Daphne's book.

\usepackage{graphicx}
\usepackage{subcaption}
\usepackage{hyperref}
\usepackage{url}
\usepackage{amsthm}

\usepackage{amsmath}
\usepackage{enumerate}
\usepackage{wrapfig}
\usepackage{enumitem}

\newtheorem{theorem}{Theorem}
\newtheorem{corollary}[theorem]{Corollary}

\newtheorem{lemma}[theorem]{Lemma}

\usepackage{tcolorbox}
\usepackage{xcolor}

\title{
  {%
  \spaceskip=1\fontdimen2\font plus .10\fontdimen2\font minus .90\fontdimen2=\spaceskip
    Attention Sinks and Compression Valleys in LLMs are Two Sides of the Same Coin
  }%
  
}

\author{Enrique Queipo-de-Llano\thanks{Denotes equal first authorship. Correspondence to alvaro.arroyo@eng.ox.ac.uk and \\ enrique.queipodellanoburgos@reuben.ox.ac.uk}\hspace{0.4em}$^{,1}$,  Álvaro Arroyo$^{*,1}$, Federico Barbero$^{1}$,\\
\textbf{Xiaowen Dong$^{1}$, Michael Bronstein$^{1,2}$, Yann LeCun$^{3}$, Ravid Shwartz-Ziv$^{3}$} \\
$^{1}$University of Oxford $^{2}$AITHYRA $^{3}$New York University}

\iclrfinalcopy % Uncomment for camera-ready version, but NOT for submission.
\begin{document}

\maketitle

\begin{abstract}

Attention sinks and compression valleys have attracted significant attention as two puzzling phenomena in large language models, but have been studied in isolation. In this work, we present a surprising connection between attention sinks and compression valleys, tracing both to the formation of massive activations in the residual stream. We prove theoretically that massive activations necessarily produce representational compression and establish bounds on the resulting entropy reduction. Through experiments across several models (410M--120B parameters), we confirm that when the beginning-of-sequence token develops extreme activation norms in the middle layers, both compression valleys and attention sinks emerge simultaneously. Targeted ablation studies validate our theoretical predictions. This unified view motivates us to propose the \textit{Mix-Compress-Refine} theory of information flow, as an attempt to explain how LLMs organize their computation in depth by controlling attention and representational compression via massive activations. Specifically, we posit that Transformer-based LLMs process tokens in three distinct phases: (1) broad mixing in the early layers, (2) compressed computation with limited mixing in the middle layers, and (3) selective refinement in the late layers. Our framework helps explain why embedding tasks perform best at intermediate layers, whereas generation tasks benefit from full-depth processing, clarifying differences in task-dependent representations.
\end{abstract}

\section{Introduction}

Large Language Models (LLMs) have become remarkably capable, yet how they process information through their layers remains poorly understood.  Two phenomena have particularly puzzled researchers: \emph{attention sinks}, where attention heads mysteriously collapse their focus onto semantically uninformative tokens~\citep{xiao2024efficient}, and \emph{compression valleys}, where intermediate representations show unexpectedly low entropy despite the model's high-dimensional space~\citep{skean2025layer}.

These phenomena appear paradoxical: why would powerful models waste attention on meaningless tokens, and why would representations compress in the middle of processing?
Previous work has explained attention sinks through positional biases~\citep{gu2025when} and over-mixing prevention~\citep{barbero2025llmsattendtoken}, while compression valleys have been explained through  an information bottleneck theory~\citep{skean2025layer}. However, the precise reasons why they emerge remain unclear and no formal link has been established between them.

We reveal that attention sinks and compression valleys are two manifestations of a single mechanism: \emph{massive activations} in the residual stream. These extremely large-magnitude features emerge on specific tokens (typically the beginning-of-sequence  token, \textsc{bos}), create both effects simultaneously, and act as input-agnostic biases. While prior work linked massive activations to attention sinks~\citep{sun2024massive,cancedda2024spectral}, we prove they also drive compression: when a single token's norm dominates others, it necessarily creates a dominant singular value in the representation matrix, explaining the compression. Experiments across several models (410M--120B parameters) confirm this unified mechanism,  connecting both phenomena via the massive activations. 

This unified mechanism reveals how transformers organize computation across depth through the \emph{Mix-Compress-Refine} framework: massive activations control three distinct processing phases. Early layers (0--20\% depth) mix information broadly via diffuse attention. Middle layers (20--85\%) compress representations while halting mixing through attention sinks, both triggered by massive activation emergence. Late layers (85--100\%) selectively refine through localized attention as norms equalize. This phase structure explains task-specific optimal depths: embedding tasks peak during compression, while generation requires full refinement.

Our contributions are:
\begin{itemize}[leftmargin=2em,itemsep=0pt]
\item We empirically demonstrate that attention sinks and compression valleys emerge simultaneously in middle layers across several models (410M--120B parameters).

\item We prove that massive activations mathematically require compression, providing tight bounds on entropy reduction and singular value dominance (Theorem~\ref{thm:main}).

\item We validate causality through targeted ablations: removing massive activations eliminates both compression and reduces attention sinks.

\item We propose the Mix-Compress-Refine theory of information flow, explaining how transformers organize computation into three distinct phases.

\item We show this framework helps resolve the puzzle of task-dependent optimal depths: embedding tasks peak during compression while generation requires full refinement.

\end{itemize}

\section{Background and Related Work}
\label{sec:background}
\vspace{-0.15cm}

In this paper, we study decoder-only transformers with $L$ layers, hidden dimension $d$, and $H$ attention heads per layer.
For a sequence of $T$ tokens, let $\mathbf{x}_i^{(\ell)} \in \mathbb{R}^d$ denote token $i$'s representation at layer $\ell$, and $\mathbf{X}^{(\ell)} \in \mathbb{R}^{T \times d}$ the full representation matrix.
Attention weights $\alpha_{ij}^{(\ell,h)}$ from token $i$ to $j$ in head $h$ satisfy causal masking ($\alpha_{ij} = 0$ for $j > i$).
Full architecture provided in Appendix \ref{app:arch}.

\vspace{-0.2cm}

\paragraph{Key Metrics.} For a representation matrix $\mathbf{X}$ with singular values $\sigma_1 \geq \sigma_2 \geq \ldots \geq \sigma_r$, we measure compression via the \textbf{matrix-based entropy}:
\begin{equation}
H(\mathbf{X}) = -\sum_{j=1}^r p_j \log p_j, \quad \text{where } p_j = \sigma_j^2/\|\mathbf{X}\|_F^2
\end{equation}

Low entropy indicates compression into few dominant directions. The \textbf{anisotropy} $p_1 = \sigma_1^2/\|\mathbf{X}\|_F^2$ measures directional bias~\citep{razzhigaev2024shapelearninganisotropyintrinsic} (near 1 = extreme bias, near $1/r$ = isotropy). For token position $k$, the \textbf{attention sink score} and \textbf{sink rate}~\citep{gu2025when} are:
\begin{equation}
\text{sink-score}_k^{(\ell,h)} = \frac{1}{T}\sum_{t=0}^{T-1}\alpha_{tk}^{(\ell,h)}, \quad \text{sink-rate}^{(\ell)}_k = \frac{1}{H}\sum_{h=1}^H\mathbb{I}\!\left(\text{sink-score}_k^{(\ell,h)}\geq \tau\right)
\end{equation}

with threshold $\tau=0.3$ (unless otherwise stated), and $\mathbb{I}$ denotes the indicator function. We focus on the \textsc{bos} token, the primary sink across models.

\vspace{-0.25cm}
\paragraph{Attention Sinks.} Attention heads mysteriously focus on semantically uninformative tokens (e.g., \textsc{bos}) across diverse models and scales~\citep{xiao2024efficient}. While~\citet{barbero2025llmsattendtoken} argues they prevent over-mixing,~\citet{cancedda2024spectral} relates them to spectral subspaces, and~\citet{gu2025when} traces emergence to pretraining, no work has yet examined their depth-wise organization.

\vspace{-0.25cm}
\paragraph{Compression Valleys.} Transformer representations compress dramatically in middle layers, where the matrix-based entropy drops significantly before recovering~\citep{skean2025layer}. This universal pattern coincides with increased anisotropy ($p_1 > 0.9$)~\citep{razzhigaev2024shapelearninganisotropyintrinsic} and near-linear layer transitions~\citep{razzhigaev2024transformersecretlylinear}. Paradoxically, these compressed representations excel at embedding tasks despite their reduced dimensionality. The mechanism remained unknown, with only information bottleneck hypotheses lacking causal evidence~\citep{skean2025layer}.

\begin{figure}[t!]
	\centering
	\includegraphics[width=\linewidth]{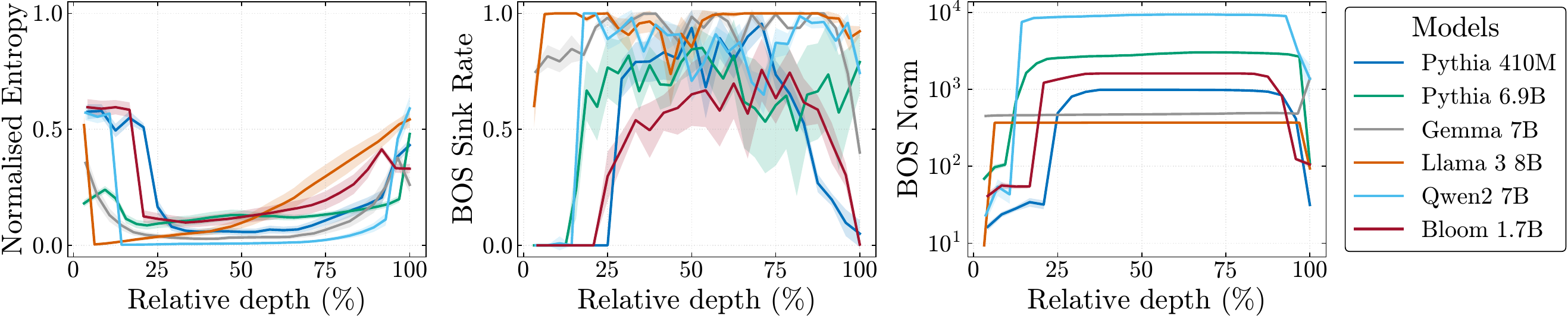}
	\caption{\textbf{Attention sinks and compression valleys emerge simultaneously when \textsc{bos} tokens develop massive activations.} Normalized entropy (\textbf{left}), \textsc{bos} sink rate (\textbf{middle}), and \textsc{bos} token norm (\textbf{right}) across layers for six models evaluated on GSM8K. All three phenomena align precisely: when \textsc{bos} norms spike by factors of $10^3$--$10^4$ (right panel), entropy drops below 0.5 bits (left) and sink rates surge to near 1.0 (middle), confirming our unified mechanism hypothesis.}%
	\label{fig:model-comparisons}%
	\vspace{-0.3cm}
\end{figure}

\vspace{-0.25cm}
\paragraph{Massive Activations.}

\citet{sun2024massive} identified extremely large-magnitude features in transformer residual streams, with individual neurons exceeding typical activations by factors of $10^3$--$10^6$.
These ``massive activations'' consistently appear on delimiter and special tokens (particularly the \textsc{bos} token), acting as input-agnostic biases. Furthermore, \citet{sun2024massive} found a link between the emergence of massive activations and attention sinks, which was reinforced in \cite{barbero2025round, yona2025interpreting}. \textit{However, none of these works link this phenomenon to representational structure or a unified theory of information flow in LLMs}.

\vspace{-0.25cm}
\paragraph{Computation Across Depth in Transformers.}

Several works have sought to understand the evolution of representations in Transformer-based models from a theoretical perspective. \citet{dong2021attention} proved that the repeated application of self-attention leads to rank collapse in simplified settings without residual connections.
\citet{geshkovski2023mathematical} analyze self-attention dynamics and show tokens cluster over depth. \citet{wu2024role} studied how layer normalization and attention masks affect information propagation, finding that normalization can prevent rank decay. Other empirical work examines intermediate layer outputs directly. We highlight the LogitLens \citep{nostalgebraist2020interpreting}  and TunedLens \citep{belrose2023eliciting}, which decode hidden states using the model unembedding matrix and an affine probe per layer, respectively. Furthermore, \cite{lad2024remarkable} measures the sensitivity to delete and swap interventions across layers and argues at different stages of depth-wise inference. Most recently, \citet{csordas2025dollms} argue that deeper LLMs underutilize their additional layers, with later layers mainly refining probability distributions rather than composing novel computations. While these studies illuminate layerwise behavior, none provides a unified mechanism that explains \textit{why} stages form in depth or predicts \textit{when} they should appear. 
\vspace{-0.25cm}

\paragraph{The Gap This Work Addresses.} Thus far, attention sinks have been tied to massive activations while compression has remained a separate observation without a causal mechanism. In this work, we document the synchronized dynamics of these phenomena, showing that the same massive activations that create sinks are also the main driver of compression. Building on their co-emergence, we propose a three-stage theory in which residual-stream norms simultaneously regulate mixing in attention heads and compression in representations. Finally, we connect the mechanism to downstream behavior, distinguishing between embedding-style and generation tasks.

\vspace{-0.3cm}
\section{Massive Activations Drive Both Attention Sinks and Compression}
\label{sec:unified}
\vspace{-0.15cm}

\begin{tcolorbox}[colback=blue!5!white,colframe=blue!75!black]
\textbf{Key Insight:} Attention sinks and compression valleys are not separate phenomena but two consequences of massive activations in the residual stream. We prove theoretically that when \textsc{bos} token norms exceed others, they necessarily create a dominant singular value, causing compression and coinciding with attention sinks. This unification reveals that a single mechanism controls both representation structure and attention in middle layers.
\end{tcolorbox}

\vspace{-0.15cm}
\subsection{Empirical Correlation: Synchronized Emergence Across Models}
\label{sec:empirical}
\vspace{-0.15cm}

We first empirically document that attention sinks and compression valleys emerge simultaneously across model families and scales. Figure~\ref{fig:model-comparisons} shows the layer-wise evolution of three metrics across six models (Pythia 410M/6.9B, LLaMA3 8B, Qwen2 7B, Gemma 7B, Bloom 1.7B): (1) matrix-based entropy $H(\mathbf{X}^{(\ell)})$, (2) $\text{sink-rate}_0^{(\ell)}$, and (3) \textsc{bos} token norm $\|\mathbf{x}_0^{(\ell)}\|$. We compute these metrics for all 7.5K training examples in GSM8K \citep{cobbe2021gsm8k}, plotting the mean and standard deviation at each layer. 

\begin{figure}[t]
    \centering
    \includegraphics[width=0.9\linewidth]{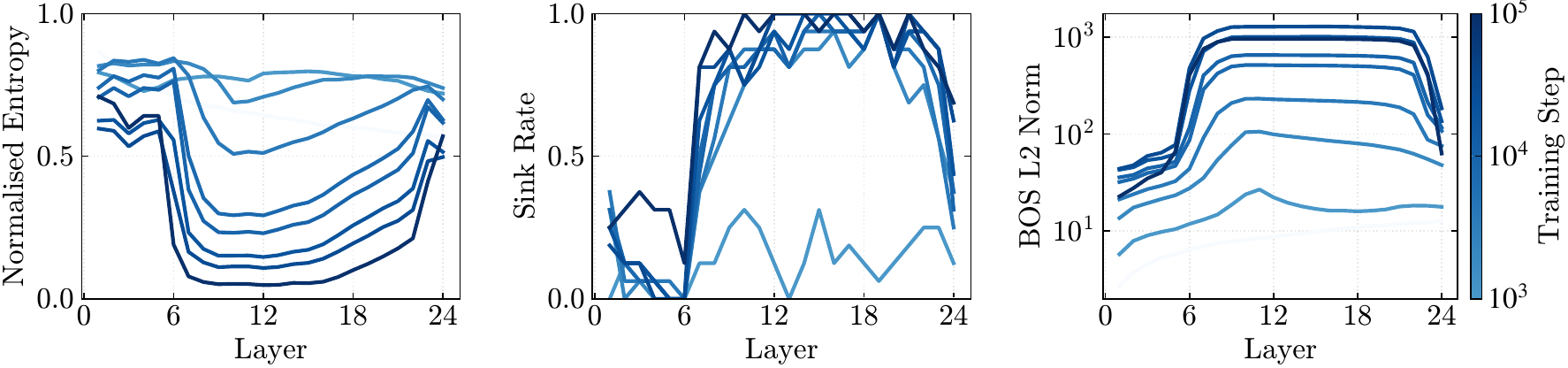}
    \includegraphics[width=0.9\linewidth]{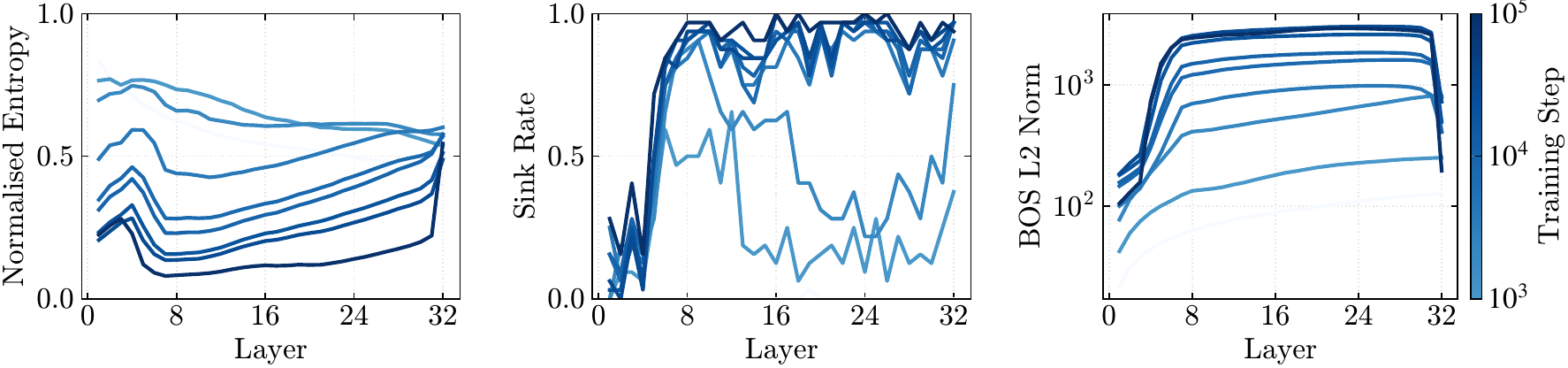}
    \vspace{-3pt}
  \caption{\textbf{The coupled emergence of massive activations, compression, and sinks develops early in training and persists.} Evolution of normalized entropy \textbf{(left)}, sink rate \textbf{(middle)}, and \textsc{bos} norm \textbf{(right)} across training checkpoints (1--143k steps) for Pythia 410M, 6.9B. All three phenomena emerge together around step 1k and remain synchronized throughout training, indicating this organization is learned early.}
    \label{fig:pythia-checkpoints}
    \vspace{-0.3cm}
\end{figure}

We observe that all {\bf three patterns align precisely}. When the \textsc{bos} norm spikes to factors of $10^3$--$10^4$ (typically layers 0--5 depending on model depth), entropy simultaneously drops and sink rates surge. We compute the Pearson correlation between the change in \textsc{bos} norm and entropy, obtaining $r = -0.9 \pm 0.18$ across models, while \textsc{bos} norm and sink rate correlate at $r = 0.58 \pm 0.25$\footnote{Here, Llama-3-8B behaves as an extreme outlier: almost all layers exhibit very large BoS activations, which makes the layerwise first-order differences nearly flat. This, in turn, renders the per-layer correlation ill-defined and artificially suppresses the overall correlation magnitude while inflating its variance.}.

We highlight that this synchronization is remarkably consistent. While sink rates vary with prompt content, the \emph{layer index} where these phenomena emerge is fixed for each model, and the massive activation is  deterministic.
For instance, in Pythia 410M, the transition consistently occurs at layer 5 regardless of input, suggesting an architectural rather than input-dependent mechanism. We show how these transitions emerge during training in Figure \ref{fig:pythia-checkpoints}. We point the reader to Appendix \ref{app:broader-analysis} for details on experiments, larger models, and a note on GPT OSS \citep{openai2025gptoss120bgptoss20bmodel}.

\vspace{-0.15cm}
\subsection{Theoretical Framework: Massive Activations Imply Compression}
\label{sec:theory}
\vspace{-0.1cm}

We now prove that massive activations necessarily induce the observed compression.
Consider the representation matrix $\mathbf{X} \in \mathbb{R}^{T \times d}$ with rows $\{\mathbf{x}_i\}_{i=0}^{T-1}$, where $\mathbf{x}_0$ denotes the \textsc{bos} token.

\begin{theorem}[Massive Activations Induce Spectral Dominance]
\label{thm:main}
Let $M = \|\mathbf{x}_0\|^2$, $R = \sum_{i \neq 0} \|\mathbf{x}_i\|^2$, and $\theta_i$ be the angle between $\mathbf{x}_0$ and $\mathbf{x}_i$.
Define the {\em alignment term} $\alpha = \frac{1}{R}\sum_{i \neq 0} \|\mathbf{x}_i\|^2 \cos^2\theta_i \in [0,1]$.
Then:
\begin{equation}
\sigma_1^2 \geq M + \alpha R,
\end{equation}
where $\sigma_1$ is the largest singular value of $\mathbf{X}$.
\vspace{-0.3cm}
\end{theorem}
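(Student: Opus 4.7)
The plan is to invoke the variational (Rayleigh--Ritz) characterisation of the top singular value and evaluate it on a single well-chosen unit vector, namely the direction of the \textsc{bos} embedding itself. Since $\sigma_1^2 = \lambda_{\max}(\mathbf{X}^{\top}\mathbf{X}) = \max_{\|v\|=1} \|\mathbf{X} v\|^2$, \emph{any} concrete unit $v$ gives a lower bound, and the rank-one mass injected by $\mathbf{x}_0$ makes $v^{\star} := \mathbf{x}_0/\|\mathbf{x}_0\|$ the natural candidate: it is the unique direction that simultaneously aligns with the massive activation and exposes, via inner products, the alignment of every other token to the sink.

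Concretely, I would carry out three short steps. First, rewrite $\|\mathbf{X} v\|^2 = \sum_{i=0}^{T-1} \langle \mathbf{x}_i, v\rangle^2$, since the $i$-th entry of $\mathbf{X} v$ is precisely $\langle \mathbf{x}_i, v\rangle$. Second, substitute $v = v^{\star}$: the $i=0$ term gives $\langle \mathbf{x}_0,\, \mathbf{x}_0/\|\mathbf{x}_0\|\rangle^2 = \|\mathbf{x}_0\|^2 = M$, while for $i\neq 0$ the identity $\langle \mathbf{x}_i, \mathbf{x}_0\rangle = \|\mathbf{x}_i\|\,\|\mathbf{x}_0\|\cos\theta_i$ yields $\langle \mathbf{x}_i, v^{\star}\rangle^2 = \|\mathbf{x}_i\|^2 \cos^2\theta_i$. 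Third, summing the latter over $i\neq 0$ and matching the definition of $\alpha$ produces exactly $\alpha R$, so $\|\mathbf{X} v^{\star}\|^2 = M + \alpha R$, whence $\sigma_1^2 \geq M + \alpha R$.

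There is no real technical obstacle: the entire content lies in selecting $v^{\star}$, after which the bound essentially writes itself. The only sanity checks worth spelling out are that $\alpha \in [0,1]$ — immediate since each $\cos^2\theta_i \in [0,1]$ and the weights $\|\mathbf{x}_i\|^2/R$ form a convex combination — and that the inequality is sharp in the transparent regime where every non-\textsc{bos} token lies on the line through $\mathbf{x}_0$, giving $\alpha = 1$ and $\sigma_1^2 = M + R$; the orthogonal regime $\alpha = 0$ still yields the clean bound $\sigma_1^2 \geq M$, which already suffices to drive compression when $M$ dominates the total Frobenius mass $M + R$. This last observation is what will let the subsequent entropy bounds of Section~\ref{sec:theory} follow by plugging the dominance $p_1 = \sigma_1^2/\|\mathbf{X}\|_F^2 \geq (M + \alpha R)/(M+R)$ directly into $H(\mathbf{X}) = -\sum_j p_j \log p_j$.
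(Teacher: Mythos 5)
Your proposal is correct and matches the paper's proof exactly: both apply the variational characterization $\sigma_1^2 = \max_{\|v\|=1}\|\mathbf{X}v\|^2$ (the Rayleigh--Ritz bound from Lemma~\ref{lemma:rayleigh}), evaluate at the test direction $v^\star = \mathbf{x}_0/\|\mathbf{x}_0\|$, expand $\|\mathbf{X}v^\star\|^2 = \sum_i\langle\mathbf{x}_i,v^\star\rangle^2$, and identify the $i=0$ term with $M$ and the remaining terms with $\alpha R$ via $\langle\mathbf{x}_i,\mathbf{x}_0\rangle^2 = \|\mathbf{x}_i\|^2\|\mathbf{x}_0\|^2\cos^2\theta_i$. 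Your added sanity checks (that $\alpha\in[0,1]$ is a convex combination of $\cos^2\theta_i$, and tightness in the fully aligned case) are consistent with the paper's surrounding discussion and add nothing incorrect.
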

\begin{proof}[Proof Sketch]
{Full proofs and discussions in Appendix \ref{app:proofs}.}
\end{proof}
\vspace{-0.3cm}
This theorem has immediate consequences for compression metrics:

\begin{corollary}[Compression Bounds]
\label{cor:compression}
Let $c = M/R$ be the norm ratio and $p = (c+\alpha)/(c+1)$. Then:
\begin{enumerate}
\item \textbf{Dominance:} $\sigma_1^2 / \sum_{j \geq 2} \sigma_j^2 \geq (c+\alpha)/(1-\alpha)$
\item \textbf{Anisotropy:} $p_1 \geq p$
\item \textbf{Entropy:} $H(\mathbf{X}) \leq -p\log p - (1-p)\log(1-p) + (1-p)\log(r-1)$
\end{enumerate}
\end{corollary}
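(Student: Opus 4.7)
The plan is to work entirely with the normalized squared singular values $p_j = \sigma_j^2/\|\mathbf{X}\|_F^2$ and let Theorem~\ref{thm:main} do all of the heavy lifting. The key identity is Pythagorean: $\|\mathbf{X}\|_F^2 = \sum_i \|\mathbf{x}_i\|^2 = M + R = R(c+1)$ also equals $\sum_j \sigma_j^2$. Combined with the bound $\sigma_1^2 \ge M + \alpha R = R(c+\alpha)$ from Theorem~\ref{thm:main}, this immediately yields $p_1 \ge (c+\alpha)/(c+1) = p$, which is claim (2).

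Claim (1) then follows from one line of arithmetic plus a monotonicity argument. Since $\sum_{j\ge 2}\sigma_j^2 = \|\mathbf{X}\|_F^2 - \sigma_1^2$, the ratio on the left equals $p_1/(1 - p_1)$; because $t \mapsto t/(1-t)$ is strictly increasing on $[0,1)$, the lower bound on $p_1$ transports to $p_1/(1-p_1) \ge p/(1-p)$. A short algebraic simplification of $(c+\alpha)/(c+1)$ divided by $(1-\alpha)/(c+1)$ collapses the right-hand side to the advertised $(c+\alpha)/(1-\alpha)$.

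For claim (3) I would bound $H(\mathbf{X})$ by a two-step maximization over the admissible set $\{(p_1,\ldots,p_r) : p_j \ge 0,\ \sum_j p_j = 1,\ p_1 \ge p\}$. Fixing $p_1$ and maximizing the Shannon entropy over $(p_2,\ldots,p_r)$ subject to total mass $1-p_1$ gives the uniform distribution $p_j = (1-p_1)/(r-1)$, reducing the problem to the univariate function $g(p_1) = -p_1\log p_1 - (1-p_1)\log(1-p_1) + (1-p_1)\log(r-1)$. A quick derivative computation shows $g'(p_1) = \log[(1-p_1)/(p_1(r-1))]$, so $g$ is decreasing on $[1/r, 1]$; hence over the range $p_1 \in [p, 1]$ its maximum is attained at $p_1 = p$, producing the stated right-hand side.

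The only mildly delicate point, and the step I expect to spell out most carefully, is the regime assumption $p \ge 1/r$ underlying the monotonicity of $g$. This is not restrictive in the massive-activation setting the theorem targets: $c = M/R \gg 1$ forces $p = (c+\alpha)/(c+1)$ close to $1$ and thus safely above $1/r$. Outside that regime one falls back on the trivial bound $H(\mathbf{X}) \le \log r$, which is consistent with the claim. The other two parts require essentially no further work beyond invoking Theorem~\ref{thm:main}.
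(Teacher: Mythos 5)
Your proof is correct for claims (1) and (2) and takes essentially the paper's route: claim (2) is identical (divide the Theorem's bound by $\|\mathbf{X}\|_F^2 = M+R$), and your derivation of (1) from (2) via the monotonicity of $t\mapsto t/(1-t)$ is a slick one-line variant of the paper's argument, which instead lower-bounds the numerator $\sigma_1^2\geq M+\alpha R$ and upper-bounds the denominator $\sum_{j\geq 2}\sigma_j^2\leq(1-\alpha)R$ separately. Both give the same conclusion.

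For claim (3), your two-step maximization (uniformize the tail for fixed $p_1$, then show $g$ is decreasing) is a cleaner reorganization of the paper's argument, which bounds the two terms $-p_1\log p_1$ and $-\sum_{j\geq 2}p_j\log p_j$ individually using $p_1\geq p$. The two routes ultimately produce the same expression $g(p)$. The genuine value you add is making the regime assumption explicit: your monotonicity argument needs $p\geq 1/r$, and the paper's step $-p_1\log p_1\leq -p\log p$ implicitly needs $p,p_1$ on the decreasing branch of $x\mapsto -x\log x$, i.e.\ roughly $p\geq 1/e$ (a \emph{stronger} condition for $r\geq 3$). Your version therefore proves the bound under a weaker hypothesis than the paper's proof does, and you correctly note that the massive-activation regime ($c\gg 1$, so $p\to 1$) makes the hypothesis automatic.

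One point to fix: the sentence claiming that ``outside that regime one falls back on the trivial bound $H(\mathbf{X})\le\log r$, which is consistent with the claim'' does not actually rescue the corollary when $p<1/r$. In that regime $g(p)<g(1/r)=\log r$, so the trivial bound $\log r$ does not imply the stated bound $g(p)$; whether the corollary even holds there is not established by this argument. You should either (a) explicitly restrict the corollary to the regime $p\geq 1/r$ (harmless for the paper's application), or (b) note that the bound might fail outside it. This is not a flaw unique to your proof — the paper's own proof has the same untreated regime — but the wording ``consistent with the claim'' suggests the gap is closed when it is not.
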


\textbf{Meaning of the results.} Theorem~\ref{thm:main} shows two factors control the rise of \(\sigma_1^2\): (i) the magnitudes of the activations \(M\), and (ii) the \emph{alignment} \(\alpha\) of other rows with \(\rvx_0\). Full alignment makes $\rmX$ rank one (with $\sigma_1^2(\rmX) = M + R = ||\rmX||_F^2$), while even with small \(\alpha\) with large $M$ suffices to grow \(\sigma_1^2\). The corollaries give lower bounds on dominance and anisotropy in terms of \((c,\alpha)\), so increasing \(c\) (stronger gap between activations) or increasing \(\alpha\) (stronger alignment) provably widens the spectral gap. Consequently, the singular-value entropy is tightly upper-bounded by the mass in the top component, so $c\gg1$ or $\alpha\rightarrow1$ also results in $H(\mathbf{X})$ dropping towards zero. Empirically, we find that \(\alpha R\) remains small compared to $M$ throughout the compression regime, so the behaviour of \(\sigma_1^2\) is effectively dictated by the massive activation term \(M\). More details in Appendix \ref{app:broader-analysis}.

\textbf{Tightness of the bounds in practice.}
When massive activations create growing norm ratios $c$, these bounds become tight. Figure~\ref{fig:pythia-bounds} compares our theoretical bounds against empirical measurements for Pythia 410M across all layers.
In early layers where massive activations are absent, the bounds are loose as expected, because the theory only constrains the intermediate layers.
However, in middle layers where massive activations emerge, the bounds become nearly exact, with predicted and observed values overlapping within measurement error. This tightness reveals that massive activations are the dominant mechanism shaping representation geometry: when $\|\mathbf{x}_0\|$ becomes massive, the representation matrix effectively becomes rank-one plus small perturbations, exactly as our theory predicts.

\begin{figure}[t]
    \centering
    \subfloat{\includegraphics[width=0.4\linewidth]{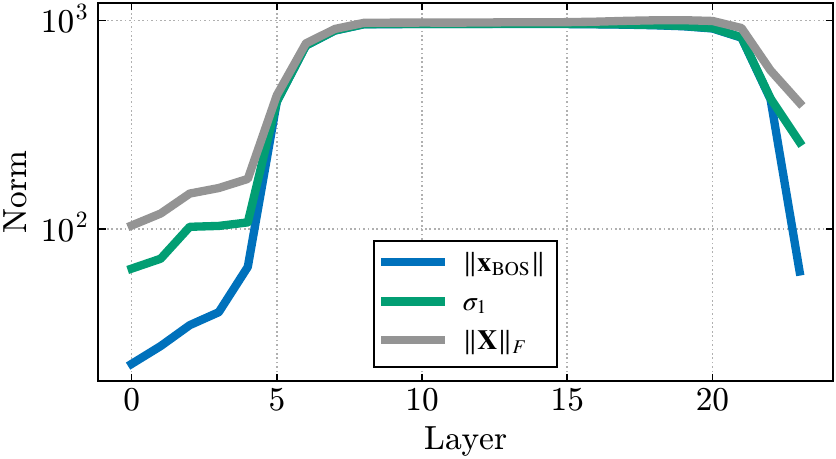}}
    \quad
    \subfloat{\includegraphics[width=0.4\linewidth]{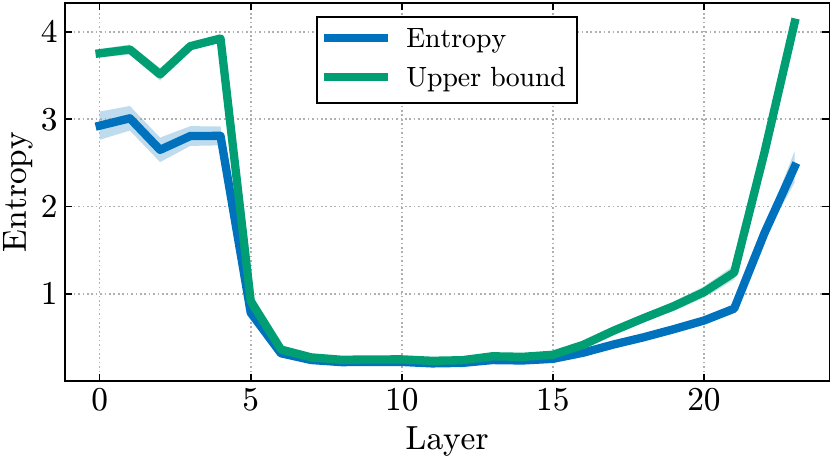}}
    \vspace{-0.2cm}
    \caption{\textbf{Our theoretical bounds become exact when massive activations emerge, proving they drive compression.} \textbf{Left:} When \textsc{bos} norm dominates (layers 5--15), the first singular value $\sigma_12$ approximately equals both $\|\rvx_{\text{BOS}}\|$ and $\|\rmX\|_F$, confirming near rank-one structure. \textbf{Right: }Our entropy upper bound (Corollary~2) tightly matches empirical values in compressed layers, validating that massive activations mathematically necessitate compression. Average across 100 RedPajama \citep{weber2024redpajamaopendatasettraining} examples for Pythia 410M.}
    \label{fig:pythia-bounds}
    \vspace{-0.3cm}
\end{figure}

\begin{figure}[b]
    \centering
    \includegraphics[width=0.95\textwidth]{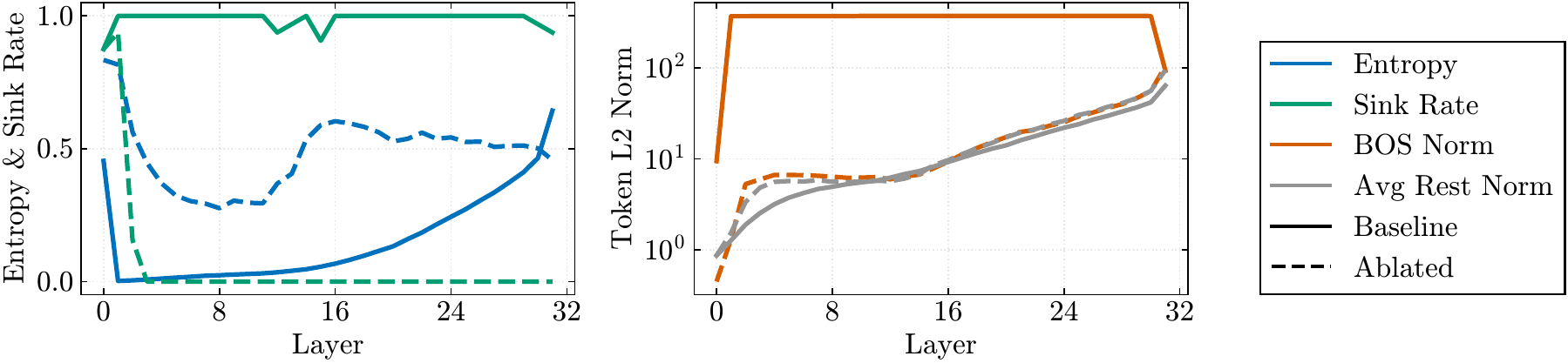}
    \vspace{-0.15cm}
    \caption{\textbf{Removing massive activations eliminates both compression and attention sinks, confirming causality.} Ablating the MLP contribution to the \textsc{bos} token at layer 0 in LLaMA3 8B has three effects: \textbf{(Left)} Entropy remains at $\sim$0.5 bits instead of dropping to 0.02, showing decompression. \textbf{(Middle)} Sink rate stays at 0 throughout depth, confirming no attention sink formation. \textbf{(Right)} \textsc{bos} norm (orange) remains comparable to the rest of tokens (grey) instead of spiking by $10^3\times$. This causal intervention validates that massive activations drive both phenomena.}
    \label{fig:massive-ablations}
    \vspace{-0.35cm}
\end{figure}

\vspace{-0.15cm}
\subsection{Evidence from Targeted Ablations}
\label{sec:ablation}
\vspace{-0.15cm}

To isolate the exact role of massive activation empirically, we perform targeted ablations: zeroing the MLP's contribution to the \textsc{bos} token at layers where massive activations emerge.
Specifically, we set $\mathbf{x}_{\text{BOS}}^{(\ell+1)} \leftarrow \mathbf{x}_{\text{BOS}}^{(\ell)} + \text{Attn}^{(\ell)}(\mathbf{x}_{\text{BOS}})$, removing only $\text{MLP}^{(\ell)}(\mathbf{x}_{\text{BOS}})$.

We find that \textbf{ablating massive activations can eliminate both phenomena}, confirming our theoretical conjecture. 
In LLaMA3 8B, removing the MLP's contribution at the first layer prevents entropy drop (remains at 0.4-0.5 bits vs. dropping to 0.02 bits), eliminates sink formation (sink rate drops from 0.85-1.0 to 0.0) and maintains \textsc{bos} norm within $2\times$ of other tokens (vs. $10^2\times$ normally) (Fig.~\ref{fig:massive-ablations}). Similar results hold across models, as seen in Figs.\ref{fig:mlp-ablations-qwen},\ref{fig:mlp-ablations-pythia} in Appendix \ref{app:broader-analysis}. Some models (Pythia 410M, Qwen2 7B) develop massive activations in more than one stage.
Ablating any single stage partially reduces compression; ablating all stages eliminates it entirely, suggesting a cumulative contribution. 

\textbf{Why Middle Layers?} We hypothesize that the mid-depth concentration of sinks and compression reflects how Transformers allocate computation across depth: early layers perform broad contextual mixing, while later layers become increasingly aligned with next-token prediction \citep{lad2024remarkable}. Freed from these pressures, middle layers can develop extreme features that regulate token-to-token sharing and induce compression, consistent with a mid-network shift toward refinement \citep{csordas2025dollms}. In the next section, we show how massive activations predict, and precisely characterize, three stages of information flow.

\vspace{-0.2cm}
\section{Mix-Compress-Refine: A Theory of Information Flow}\label{sec:mcr}
\vspace{-0.15cm}

\begin{tcolorbox}[colback=blue!5!white,colframe=blue!75!black]
\textbf{Key Insight:} Transformers organize computation into three distinct phases demarcated by massive activations. Early layers mix information broadly to build context. Middle layers compress representations and halt mixing when massive activations emerge, preventing over-smoothing while maintaining essential information. Late layers equalize norms and switch to local positional attention, refining representations for task-specific outputs. 
\end{tcolorbox}

Building on our mechanistic understanding of massive activations, we now present a broader theory of how transformers organize computation across depth.
We propose that information processing occurs in three distinct phases, demarcated by the emergence and dissipation of massive activations.

\vspace{-0.1cm}
\subsection{Phase 1: Information Mixing (Layers 0--20\%)}\label{sec:phase1}
\vspace{-0.1cm}

In the early layers of the model, we observe diffuse attention patterns, which are enabled by the lack of massive activations. This allows the model to perform mixing of high-dimensional token representations for a few layers, allowing the model to build contextual representations through broad information integration. An example of such an attention head in the early layers can be seen in Figure \ref{fig:pythia-heads}. 

To quantify mixing in attention heads, we define the \textit{Mixing Score} as the average row entropy of attention matrices: $\frac{1}{T}\sum_{i=1}^T H(\mathbf{A}_{i,:}^{(\ell,h)})$.
Across models, we find that early layers consistently maintain mixing scores above $0.7$, confirming active token mixing, before dropping sharply when massive activations emerge.
Notably, this mixing phase varies in extent: From just the first layer in some models to approximately 20\% of network depth in others. However, its qualitative characteristics remain consistent. We plot this metric Figure~\ref{fig:colsum-rowent} in Appendix \ref{app:mixing-metrics} across models and layers.

We believe that this initial mixing stage is deliberately limited to prevent over-mixing and representational collapse that would occur with extended uniform attention \citep{barbero2024transformers,barbero2025llmsattendtoken}, analogous to over-smoothing in graph neural networks \citep{arroyo2025vanishing}. This controlled, brief mixing phase establishes the semantic foundation that subsequent phases refine.
The model captures both local token dependencies and global context, creating rich representations that can be selectively compressed and refined in later phases.

\vspace{-0.15cm}
\subsection{Phase 2: Compression and Halted Mixing (Layers 20--85\%)}\label{sec:phase2}
\vspace{-0.1cm}

The middle phase begins abruptly with the emergence of massive activations, typically on the \textsc{bos} token.
As established in Section~\ref{sec:theory}, these massive activations necessarily induce representational compression, as well as attention sink formation \citep{gu2025when}. This phase serves as a computational shut-off switch for mixing. The attention sinks act as approximate ``no-ops" \citep{bondarenko2023quantizabletransformersremovingoutliers}. By attending to \textsc{bos} tokens with near-zero value norms, heads effectively skip their contribution while preserving the residual stream.

\begin{figure}[t]
	\centering
	\subfloat{{\includegraphics[width=0.27\textwidth]{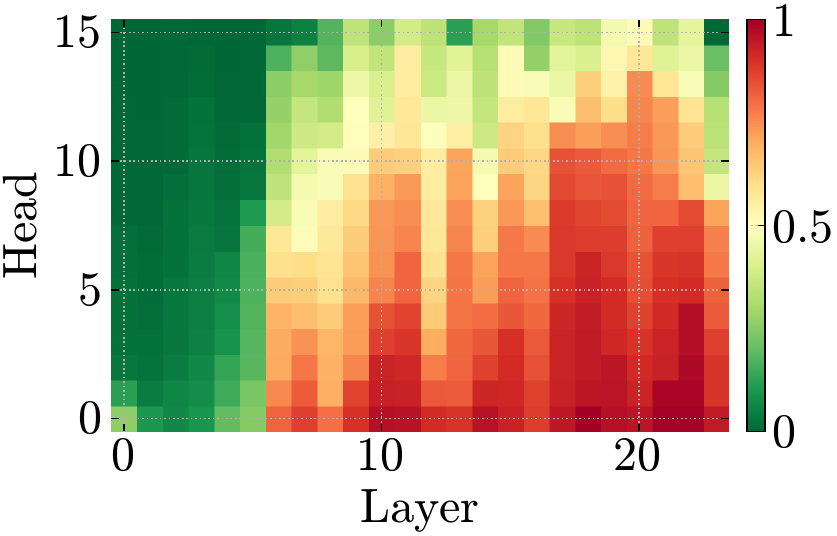} }}%
	\qquad
	\subfloat{{\includegraphics[width=0.27\textwidth]{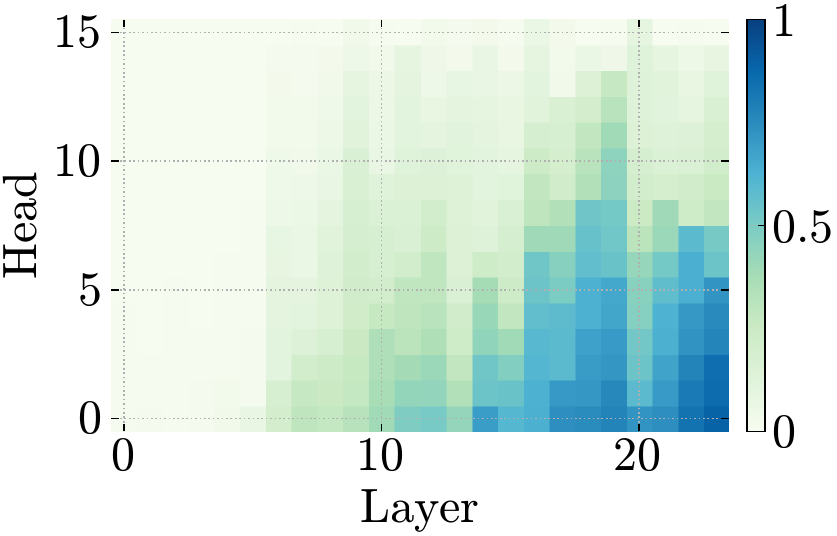} }}%
	\qquad
	\subfloat{{\includegraphics[width=0.27\textwidth]{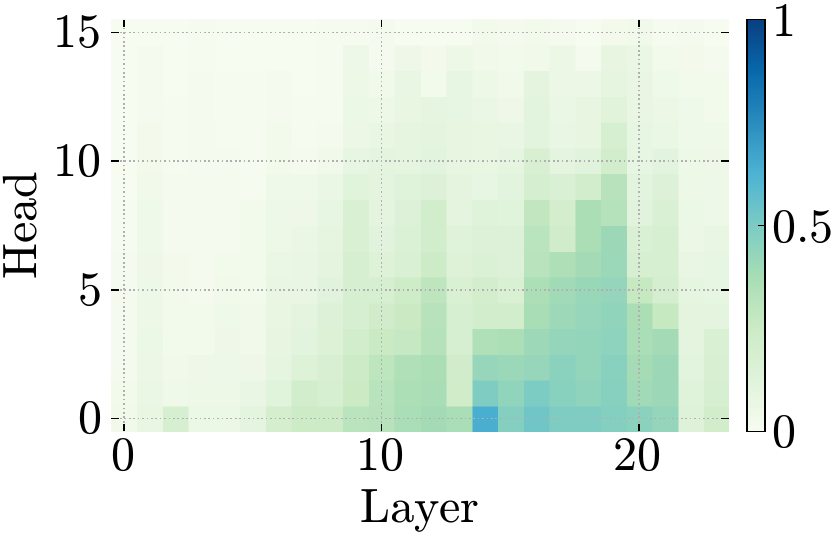} }}%
	\caption{\textbf{Middle-layer sinks adapt to input complexity while early mixing remains constant.} \textbf{(Left)} \textsc{bos} sink scores for a high-sink prompt in Pythia 410M, showing strong attention to \textsc{bos} in layers 5--20. \textbf{(Middle)} Difference in \textsc{bos} sink scores between high-sink and low-sink prompts, revealing input-dependent variation concentrated in middle layers. \textbf{(Right)} Difference in mixing scores between the same prompts, showing near-zero variation in early layers. This demonstrates that Phase 1 performs fixed mixing regardless of input, while Phase 2 compression dynamically adjusts sink strength based on prompt complexity.}
	\vspace{-0.4cm}
	\label{fig:head-heatmaps-pythia}%   
\end{figure}

In these middle layers, the model refined information through the compressed residual stream, where a few dominant directions preserve high-level context while discarding redundancies.
This aligns with the depth-efficiency perspective of \citet{csordas2025dollms}, who show that mid-layers contribute less to shaping future tokens and more to stabilizing current representations. In Section~\ref{sec:downstream}, we show that performance on generation tasks tends to improve mostly in the latter half of this second phase. We hypothesize this lag reflects the time mid-layer MLPs need to process and consolidate the compressed signal before yielding token-level refinements. We do not treat this as a separate phase, however, since it is not cleanly demarcated by the emergence or dissipation of massive activations.

\textbf{Sink behavior in middle to late layers adapts to input complexity.} Mid to late layer sinks are \emph{input-dependent} (``dormant heads"), which are often inert but active on specific prompts \citep{guo2024active}. Reproducing \citet{sandovalsegura2025usingattentionsinksidentify} on 20K FineWeb-Edu prompts \citep{lozhkovfineweb},  we compare the ``top" prompts (with the strongest sink scores) and the “bottom” prompts (with the weakest). As shown in Figure~\ref{fig:head-heatmaps-pythia}, sink strength diverges in the middle and last layers. This provides evidence that while middle layers default to sink-like behavior that limits mixing, sink strength in this phase varies depending on the prompt.

\begin{figure}[b!]
	\centering
	\includegraphics[width=0.25\textwidth]{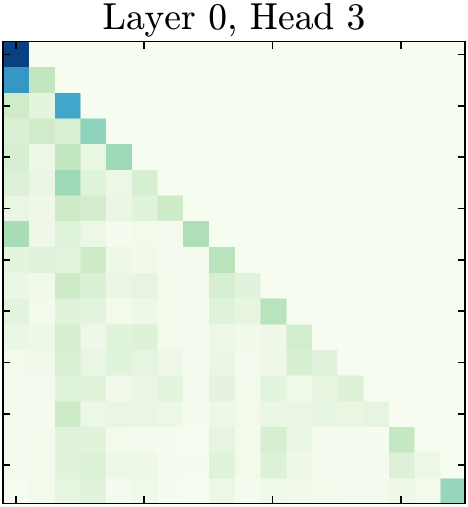}
	\quad
	\includegraphics[width=0.25\textwidth]{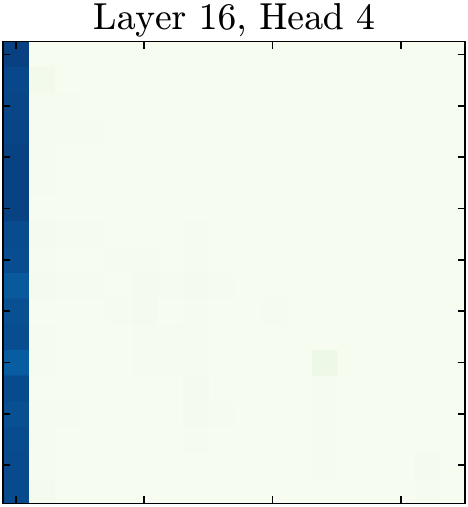} 
	\quad
	\includegraphics[width=0.25\textwidth]{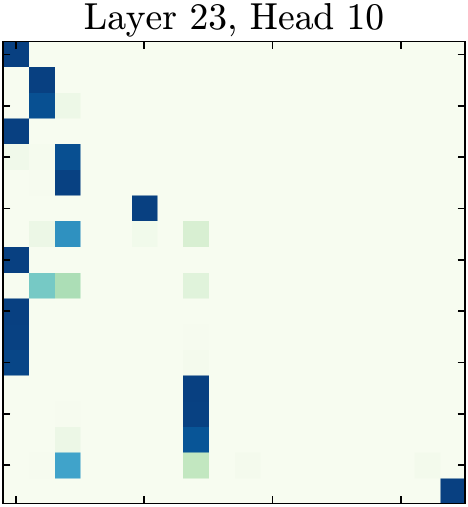} 
	\vspace{-3pt}
	\caption{\textbf{Attention patterns transform from diffuse mixing to sinks to positional focus across depth.} Evolution of attention patterns in Pythia 410M showing representative heads at layers 0, 16, and 23. Early layers exhibit diffuse attention enabling broad information mixing. Middle layers show sink patterns that halt mixing. Late layers display sharp positional patterns for selective refinement.}
	
	\vspace{-0.3cm}
	\label{fig:pythia-heads}
\end{figure}

\vspace{-0.2cm}
\subsection{Phase 3: Selective Refinement (Layers 85--100\%)}
\vspace{-0.1cm}

In the final phase, the model reverses the compression bottleneck through norm equalization and a fundamental shift in attention patterns.

\textbf{Norm equalization drives decompression.}
In this phase, we find the \textsc{bos} norm plateaus or decreases while the average norm of the rest of tokens rises sharply, driving them toward similar magnitudes (right panel in Figure \ref{fig:massive-ablations} and Figure~\ref{fig:norm-equalizations}).
This equalization begins earlier than the full phase transition, where the average norm starts rising around 40--60\% depth, preparing for the eventual shift.
The massive activation ratio drops from $>10^3$ to $<10$, removing the mathematical basis for compression and allowing representations to re-expand.

\textbf{Attention shifts to positional patterns.}
As massive activations dissipate, we observe heads transition from sink-dominated to position-based patterns. In particular, we observe the emergence of \textit{identity heads} ($i \rightarrow i$), \textit{previous-token heads} ($i \rightarrow i-1$), and other sharp attention patterns, where by sharp we mean highly localized attention patterns.
Figure~\ref{fig:pythia-heads} shows an example of such a pattern in the Pythia 410M model. We find that sharp positional patterns are especially common in RoPE-based models, consistent with recent work \citep{barbero2025round} showing that RoPE induces frequency-selective structure that favors the emergence of such heads. We provide empirical evidence of this in Appendix~\ref{app:mixing-metrics}. In particular, when measuring the mixing rate of attention patterns, only models \emph{without} RoPE revert to higher mixing in later layers, whereas RoPE-based models consistently transition toward sharp positional attention.

\begin{figure}[h!]
    \centering
\subfloat{\includegraphics[width=0.3\linewidth]{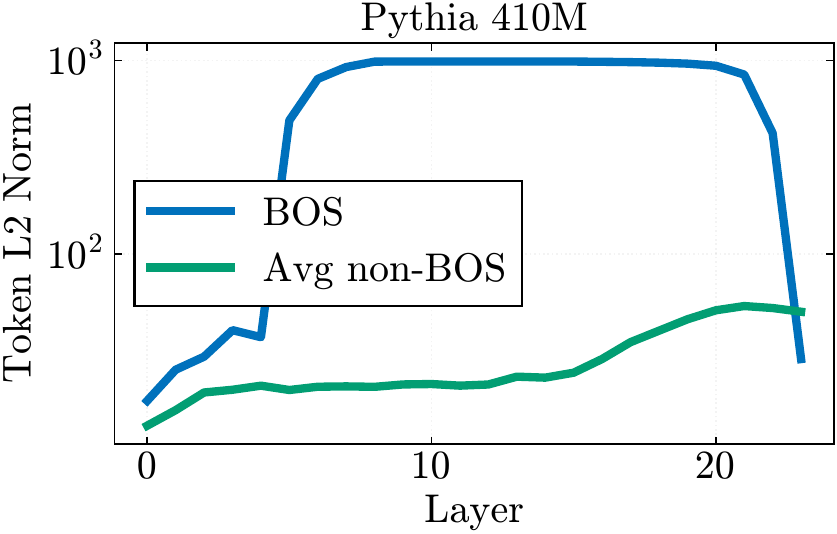}}
\quad
\subfloat{\includegraphics[width=0.3\linewidth]{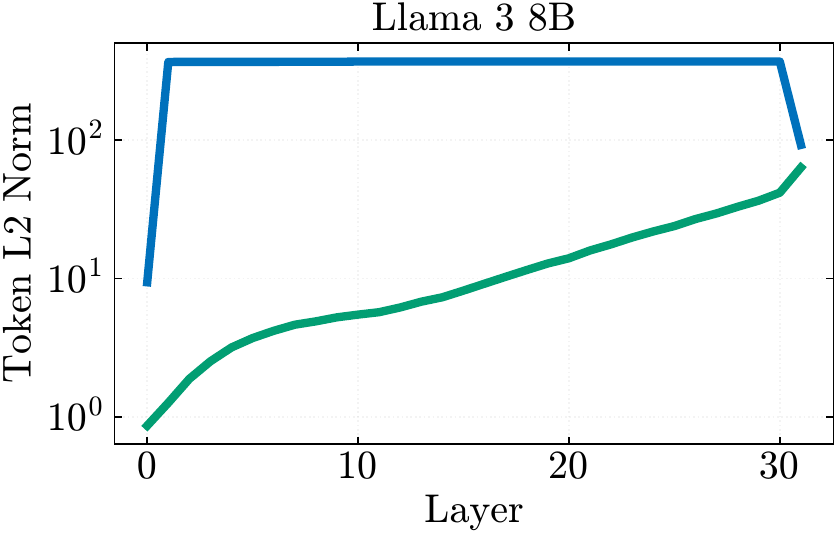}}
\quad
\subfloat{\includegraphics[width=0.3\linewidth]{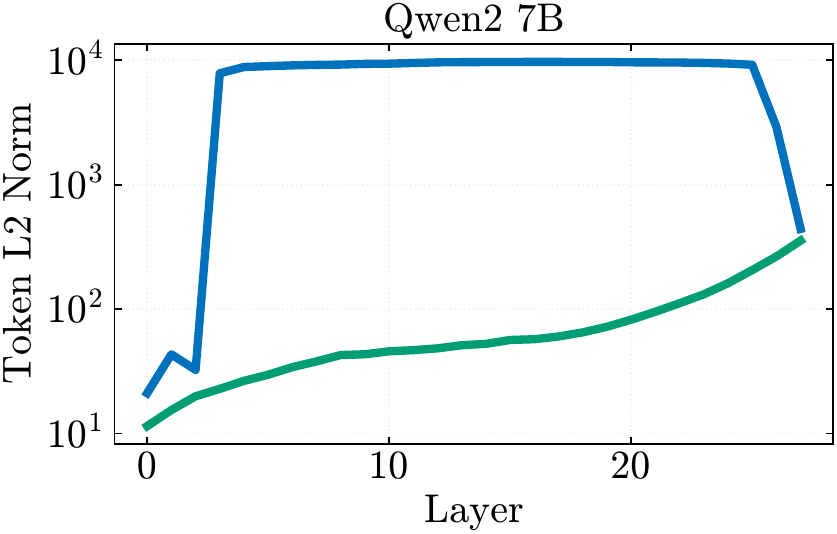}}
    \caption{LLMs equalize token norms towards at the end of the network.}
    \label{fig:norm-equalizations}
\end{figure}

\textbf{This phase serves three computational purposes.}
We believe that this phase serves three purposes. First, \textit{norm equalization} reduces \textsc{bos} dominance so content tokens can meaningfully influence the residual pathway and receive token-specific refinements. Second, attention shifts to \textit{sharp} (often positional) heads that perform \textit{selective mixing}, focusing on a few task-relevant tokens and writing their features into the residual. As late layers re-expand capacity, these signals can be represented distinctly rather than squeezed by the mid-layer bottleneck. In parallel, \textit{identity/near-diagonal} heads curb mixing without defaulting to sinks, where their non-zero value writes act as local signal boosters, in contrast to \textsc{bos} sinks that effectively zero out updates. Third, bringing token norms to smaller, comparable scales likely improves numerical stability for the unembedding. Notably, models tend to equalize by \emph{boosting content-token norms} rather than fully collapsing the \textsc{bos} norm, preserving a modest global bias while enabling precise, content-driven refinements.

\vspace{-0.2cm} 
\section{Implications for Downstream Performance}\label{sec:downstream}
\vspace{-0.1cm}

\begin{tcolorbox}[colback=blue!5!white,colframe=blue!75!black]
\textbf{Key Insight:} Different tasks achieve peak performance at different phases of the Mix-Compress-Refine organization. Embedding tasks peak during Phase 2's compression, benefiting from lower-dimensional spaces. Generation tasks improve monotonically through all phases, requiring Phase 3's refinement for accurate next token prediction. Multiple-choice reasoning shows flat performance until mid-depth, suggesting it needs both compression and subsequent refinement. This explains why studies reach different conclusions about ``optimal'' layers: they're measuring fundamentally different computational objectives.
\end{tcolorbox}

Prior work \citep{skean2025layer} found that mid-layer representations perform strongly, particularly on embedding benchmarks, and linked this effect to the mid-depth compression valley. In this section, we broaden the picture by evaluating both embedding and generation tasks, relating their depthwise performance to the three-stage framework introduced above. 

\vspace{-0.15cm}
\subsection{The Distinct Performance Patterns Across Tasks}
\vspace{-0.15cm}

\textbf{Generation improves monotonically through all phases.} We begin by evaluating intermediate layers across multiple model families and sizes using LogitLens \citep{nostalgebraist2020interpreting} on \texttt{WikiText-2} \citep{wikitext}. We observe a steady perplexity decline with depth from $>10^4$ in early layers to 10-25 at full depth, as shown in Figure \ref{fig:generation-vs-embedding} (left).
We notice little gain in the very early layers (consistent with an embedding-formation stage) and continued refinement through mid-depth. Across several models, the sharpest improvements occur in Phase 3, where norm equalization and positional/identity heads enable token-specific refinements, which appear to be crucial for next-token prediction tasks.

We further test the same set of models on multiple-choice question-answering tasks. We evaluate on ARC Easy, ARC Challenge, HellaSwag, and WinoGrande \citep{allenai:arc,zellers2019hellaswagmachinereallyfinish,sakaguchi2019winograndeadversarialwinogradschema} via LogitLens and the LM Evaluation Harness \citep{eval-harness} with zero-shot learning. Figure \ref{fig:generation-vs-embedding} (middle) shows the results for ARC Easy and results for the rest of the datasets can be found in Figure~\ref{fig:logitlens-downstream}, Appendix \ref{app:additional-downstream}. For sufficiently large models, accuracies remain largely flat until roughly 
40-60\% depth and then rise sharply. This suggests that, for generation-aligned tasks, \emph{compression alone} (Phase 2) is not sufficient; gains emerge only toward the end of Phase 2 (after sufficient residual refinement) and continue into Phase 3, where norm equalization and positional/identity heads enable token-specific updates. 

In short, in generation settings, the late Phase 2 to Phase 3 transition is pivotal, aligning with the observation in \citet{csordas2025dollms} of a mid-network phase change from future-token to current-token computation, providing independent validation that Phase 2 and Phase 3 serve distinct computational roles.

\textbf{Embedding tasks peak in middle layers.} To highlight the difference between generation and embedding tasks, we implement the following linear probing experiment. For each dataset, we encode the examples with the frozen backbone and extract hidden states at every layer. We then train a linear classifier at each layer on the training split and evaluate it on the test split. We probe the same multiple-choice QA benchmarks as before and, additionally, a sentence classification dataset (SST-2; \citealt{socher-etal-2013-recursive}), assessing how much task-relevant information is linearly accessible at different depths. Figure \ref{fig:generation-vs-embedding} (right) shows the results for ARC Easy, highlighting the difference in optimal depths with the corresponding generation task, while full results for this experiment are shown in Fig.~\ref{fig:linear-probes}, Appendix~\ref{app:additional-downstream}. Moreover, we reproduce \citet{skean2025layer} on 32 MTEB tasks \citep{muennighoff2023mtebmassivetextembedding} across a broader set of larger, decoder-only models. Across the board, we find that performance peaks consistently at 25–75\% relative depth, outperforming early/late layers by 10-20\% and precisely aligning with Phase 2, where compression is strongest (see Fig. \ref{fig:mteb} in Appendix \ref{app:additional-downstream}). These results align with evidence that next-token pretraining does not uniformly benefit perception-style classification \citep{balestriero2024for}. Together, they underscore \textit{task dependence}: classification-relevant linear features concentrate in intermediate layers, whereas late layers are repurposed for token-specific generative refinement. Furthermore, the pattern suggests that massive activations in the residual pathway not only curb over-mixing via sink formation but also act as a mechanism the model uses to compress information.

\begin{figure}[t]
    \centering
    \includegraphics[width=0.95\linewidth]{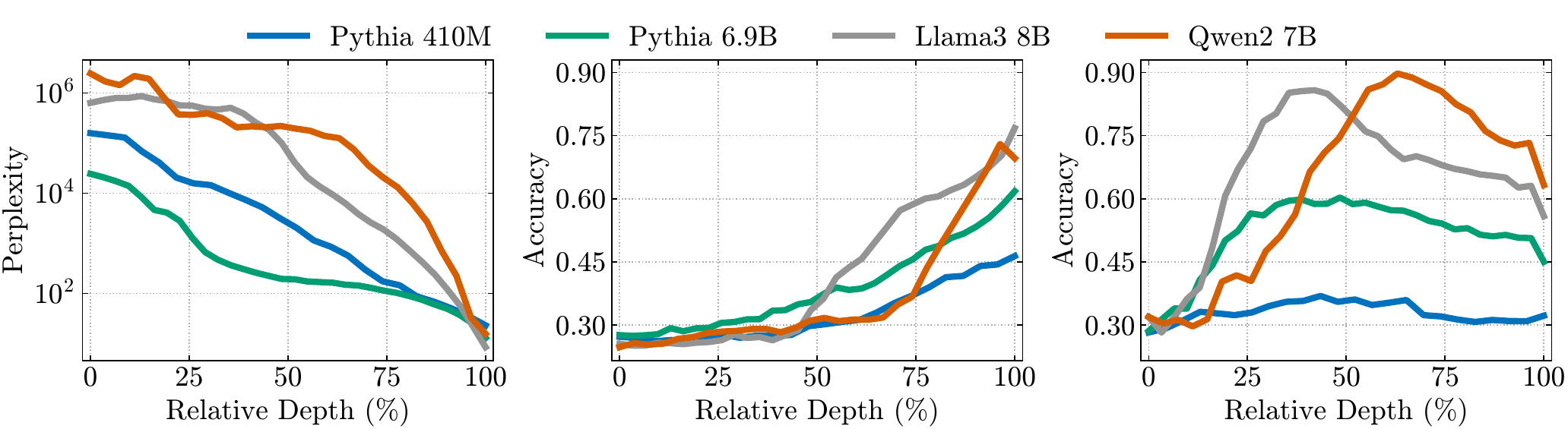}
    \caption{\textbf{Embedding tasks peak during compression while generation requires full refinement, revealing distinct computational objectives.} \textbf{(Left, Middle)} Perplexity on Wikitext-2 and multiple-choice QA accuracy in ARC Easy via LogitLens generally do not improve significantly until $\sim$50\% depth, then decreases/rises steadily through Phase 3.} \textbf{(Right)} Linear probe test accuracy on the same task peaks at 25--75\% depth (Phase 2) and declines thereafter. This divergence demonstrates that embedding-relevant features concentrate in compressed middle layers, while generation tasks require full-depth for token-specific predictions.

    \vspace{-0.3cm}
    \label{fig:generation-vs-embedding}
\end{figure}

\vspace{-0.15cm}
\subsection{Why Do Different Tasks Need Different Phases?}
\vspace{-0.15cm}

We believe these findings clarify which tasks actually benefit from compression. In particular, embedding-style objectives (such as clustering, retrieval, classification, bitext mining, etc.) gain from Phase 2's compression because they target low-dimensional structure while discarding irrelevant information, echoing classic arguments on the benefits of information bottlenecks and compressed representations \citep{shwartz2018representation,kawaguchi2023does}. This picture aligns with evidence that LLMs produce surprisingly linear (and often linearly separable) embeddings \citep{razzhigaev2024transformersecretlylinear,marks2024geometrytruthemergentlinear}. In particular, when features concentrate in a low-dimensional subspace, linear probing, semantic retrieval, and related embedding tasks become easier. Moreover, such linear structure has been linked to the emergence of high-level semantic concepts \citep{park2024linearrepresentationhypothesisgeometry}, reinforcing our hypothesis on why mid-layer compressed states tend to work well for non-generative evaluations.

By contrast, generation and reasoning require capacity that compressed states alone cannot provide. Performance improves the most once Phase 3 norm equalization restores higher entropy, and positional heads/MLPs can refine token-specific details, which is also when we observe the models being most confident about their predictions (see Fig. \ref{fig:entropy-ntp} in Appendix \ref{app:additional-downstream}). In this way, the model makes use  of the compressed and refined representation from Phase 2, which has captured high-level ideas and semantic concepts, and expands this into higher-dimensional space to perform token-level refinements in Phase 3.

This reconciles two views: compressed mid-layers suit embedding benchmarks, whereas next-token-prediction–aligned tasks benefit from full-depth processing. Practically, “optimal layer” selection should match phase to objective, suggesting phase-aware early exiting \citep{schuster2022confident} as a potentially promising design choice.

\vspace{-0.3cm}
\section{Conclusion}
\vspace{-0.15cm}

In this work, we revisited two puzzling phenomena in decoder-only Transformers, attention sinks and compression valleys. We began with the observation that attention sinks, compression valleys, and massive activations all emerge at the same time in language models. We then proved that a single high-norm token necessarily induces a dominant singular value, yielding low matrix-entropy and high anisotropy, and we bounded these effects quantitatively.

Building on this, we proposed a Mix–Compress–Refine theory of depth-wise computation in LLMs. In particular, we show that early layers mix broadly through diffuse attention, middle layers compress and curb mixing via attention sinks, late layers re-equalize norms and apply sharp positional heads for selective refinement. The boundaries between these phases are marked by the appearance and later disappearance of massive activations in depth. We use this organization to clarify downstream task behavior. While embedding-style tasks peak in compressed mid-layers, generation improves through late refinement and benefits from full depth. 

We see this framework as a step toward a more mechanistic account of how LLMs allocate computation across depth. We hope these insights help connect head-level mechanisms with representation geometry, ultimately guiding more efficient and controllable LLM designs.

\bibliography{iclr2026_conference}
\bibliographystyle{iclr2026_conference}
\clearpage
\appendix
\section{Proofs}

\subsection{Architecture}\label{app:arch}

In this work, we study decoder-only Transformers~\citep{radford2018improving}, which employ causal masking in attention and constitute the dominant architecture in today’s large language models~\citep{team2024gemma, dubey2024llama}. We follow the notation of \cite{barbero2024transformers}, but we importantly also consider a model with $H \geq 1$ attention heads:
\begin{align*}
\label{eq:transformer}
\zb^{(\ell, h)}_i &= \sum_{j \leq i} \alpha_{ij}^{(\ell, h)} \Wb^{(\ell, h)} \xb^{(\ell)}_j, \text{with } \alpha_{ij}^{(\ell, h)} = \frac{\exp\left(k\left(\qb_i^{(\ell, h)}, \kb_j^{(\ell, h)}, \pb_{ij}\right)\right)}{\sum_{w \leq i}\exp\left(k\left(\qb_i^{(\ell, h)}, \kb_w^{(\ell, h)}, \pb_{iw}\right)\right)} \\
\zb^{(\ell)}_i &= \Wb^{(\ell)}\bigoplus_{h \in H} \zb_i^{(\ell , h)} + \xb^{(\ell)}_i,\\
\xb_i^{(\ell+1)} &= \psib^{(\ell)}\left(\zb_i^{(\ell)}\right) + \zb_i^{(\ell)},
    \vspace{-15pt}
\end{align*}
where we will also denote by the matrices $\rmA^{(\ell,h)}$ the attention heads given by $\rmA^{(\ell,h)}_{ij} = \alpha^{(\ell,h)}_{ij}$. The causal masking translates into $\rmA^{(\ell,h)}$ being lower-triangular and the row-wise softmax implies row-stochasticity. 

\subsection{Theoretical results}\label{app:proofs}
This section includes the proofs of the statements of Section \ref{sec:theory}, where we show massive activations imply the dominance of a singular value. One can obtain a weaker version of the bound focused only on the massive activation (no alignment terms) that entails weaker bounds for the spectral metrics. The following lemma serves as a proof for the fact that $\sigma_1^2(\rmX) = \max_{||\rvv||= 1}{||\rmX\rvv||^2}$.

\begin{lemma}\label{lemma:rayleigh}
    Let $\rmA$ be a real symmetric $n \times n$ matrix with (real) eigenvalues $\lambda_{\max}=\lambda_1 \geq \lambda_2 \geq \ldots \geq \lambda_n = \lambda_{\min}$ and let 
    \[
    R(\rmA,\rvx) = \frac{\rvx^\top\rmA\rvx}{\rvx^\top\rvx}
    \]
    denote the Rayleigh Quotient for $\rmA$ and a real non-zero vector $\rvx\in \sR^n$. Then $R(\rmA,\rvx) \in [\lambda_{\min},\lambda_{\max}]$, achieving each bound at the corresponding eigenvectors $\rvv_{\min},\rvv_{\max}$.
\end{lemma}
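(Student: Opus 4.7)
The plan is to invoke the spectral theorem to reduce the Rayleigh quotient to a convex combination of eigenvalues, which makes both the bounds and their attainment transparent. Since $\rmA$ is real symmetric, the spectral theorem guarantees an orthogonal matrix $\rmQ \in \sR^{n \times n}$ and a diagonal matrix $\Lambda = \mathrm{diag}(\lambda_1, \ldots, \lambda_n)$ such that $\rmA = \rmQ \Lambda \rmQ^\top$, where the columns of $\rmQ$ form an orthonormal eigenbasis $\{\rvv_1, \ldots, \rvv_n\}$ with $\rmA\rvv_i = \lambda_i \rvv_i$.

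First, I would perform the change of variables $\rvy = \rmQ^\top \rvx$. Since $\rmQ$ is orthogonal, $\|\rvy\|^2 = \rvx^\top \rmQ\rmQ^\top \rvx = \rvx^\top \rvx$, and $\rvx^\top \rmA \rvx = \rvy^\top \Lambda \rvy = \sum_{i=1}^n \lambda_i y_i^2$. Substituting into the Rayleigh quotient yields
\[
R(\rmA,\rvx) = \frac{\sum_{i=1}^n \lambda_i y_i^2}{\sum_{i=1}^n y_i^2} = \sum_{i=1}^n w_i \lambda_i, \quad w_i := \frac{y_i^2}{\|\rvy\|^2}.
\]
Since the $w_i$ are nonnegative and sum to one, this is a convex combination of the eigenvalues, and therefore lies in the interval $[\lambda_{\min}, \lambda_{\max}]$, which is the convex hull of the spectrum.

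Next, I would verify attainment. Substituting $\rvx = \rvv_{\max}$ (equivalently, $\rvy = \mathbf{e}_1$) gives $w_1 = 1$ and all other $w_i = 0$, yielding $R(\rmA, \rvv_{\max}) = \lambda_{\max}$; the symmetric argument with $\rvv_{\min}$ attains $\lambda_{\min}$. I do not anticipate a genuine obstacle here: the only subtleties are that repeated eigenvalues mean the extremal eigenvector need not be unique (any unit vector in the corresponding eigenspace works), and that one must verify the denominator is nonzero, which is given by the hypothesis $\rvx \neq \vzero$. The result is a standard corollary of the spectral theorem, and the proof reduces to a one-line diagonalization argument plus the convex-combination inequality.
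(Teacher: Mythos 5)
Your proof is correct and takes essentially the same route as the paper: diagonalize $\rmA$ via the spectral theorem, change variables to the eigenbasis, and observe that the Rayleigh quotient becomes a convex combination of eigenvalues, hence lies in $[\lambda_{\min},\lambda_{\max}]$ with equality at the extremal eigenvectors. The only difference is the orthogonal-matrix convention ($\rmA=\rmQ\Lambda\rmQ^\top$ versus $\rmA=\rmQ^\top\Lambda\rmQ$), which is immaterial, and your explicit check of attainment and the remarks on repeated eigenvalues are slightly more careful than the paper's one-line closing statement.
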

\begin{proof}
    Let $\rmA=\rmQ^\top\boldsymbol{\Lambda} \rmQ$ be the diagonalization of $\rmA$ in the eigenbasis given by the $\rvv_i$ and let $\rvy = \rmQ\rvx$, such that $\rvx=\rmQ^\top\rvy$ for $\rmQ$ is orthogonal (i.e. $\rmQ^\top=\rmQ^{-1})$. Then,
    \[
    R(\rmA,\rvx) = \frac{(\rvx^\top\rmQ^\top)\boldsymbol{\Lambda} (\rmQ\rvx)}{\rvx^\top\rvx} = \frac{\rvy^\top\boldsymbol{\Lambda} \rvy}{\rvy^\top(\rmQ\rmQ^\top)\rvy} = \frac{\rvy^\top\boldsymbol{\Lambda} \rvy}{\rvy^\top\rvy} = \frac{\sum_{i=1}^n \lambda_iy_i^2}{\sum_{i=1}^ny_i^2} = \sum_{i=1}^n \lambda_i \left(\frac{y_i^2}{\sum_j y_j^2}\right).
    \]
    Since the weights $w_i = y_i^2/\sum y_j^2$ satisfy $w_i\geq 0$ and $\sum_i w_i = 1$,  $R(\rmA,\rvx)$ is a convex linear combination of the eigenvalues and therefore $\lambda_{\min}\leq R(\rmA,\rvx)\leq \lambda_{\max}$, with equalities when $\rvx=\rvv_{\max},\rvv_{\min}$.
\end{proof}

We now prove that the emergence of massive activations in some layers directly implies that the first singular value dominates the distribution, which translates into extreme values for anisotropy and matrix-based entropy. The intuition is that entropy and anisotropy are \emph{representation-only} properties: they depend solely on the singular-value spectrum of the representation matrix \(\rmX\), whose rows are token-wise representations \(\{\rvx_i\}\) and columns are features. A \emph{massive activation} means that one row, say \(\rvx_0\), carries disproportionately large norm \(M=\|\rvx_0\|^2\) compared to the rest of token representations, sometimes orders of magnitude larger. Let $\rvv = \rvx_0 / ||\rvx_0||$ be the direction of $\rvx_0$, notice we can always write $\rmX = \rve_1\rvx_0^\top + \rmY = \rve_1M\rvv^\top + \rmY$, where $\rmY$ contains the rest of the representations. If $M$ is large compared to $||\rmY||_F^2$, then $\rmX$ is effectively a rank one matrix plus a small perturbation, and we would expect $\sigma_1^2(\rmX)\approx M$ and $\rvv$ to be close to the first right singular vector. This is exactly the mechanism exploited by PCA \citep{MACKIEWICZ1993303}: the first principal component points in the direction that explains the largest variance; a massive activation creates such a dominant variance direction by construction. Therefore, even before formal bounds, we should expect \(\sigma_1^2\) to dominate whenever (i) the norm ratio \(c=||\rvx_0||^2/\sum_{i\neq 0}||\rvx_i||^2\) is large or (ii) the remaining rows \(\{\rvx_i\}_{i\neq 0}\) are measurably aligned with \(\rvx_0\). The next result formalizes this intuition.

\begin{theorem}\label{thm:rayleigh-alpha}
Let $M = \|\mathbf{x}_0\|^2$, $R = \sum_{i \neq 0} \|\mathbf{x}_i\|^2$, and $\theta_i$ be the angle between $\mathbf{x}_0$ and $\mathbf{x}_i$.
Define the {\em alignment term} $\alpha = \frac{1}{R}\sum_{i \neq 0} \|\mathbf{x}_i\|^2 \cos^2\theta_i \in [0,1]$.
Then:
\[
\sigma_1^2 \;\ge\; \|\rvx_0\|^2 + \sum_{i\ne 0}\|\rvx_i\|^2 \cos^2\theta_i
\;=\; M + \alpha R .
\]
\end{theorem}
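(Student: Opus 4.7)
The plan is to apply the variational characterization of the top singular value that follows from Lemma~\ref{lemma:rayleigh}, essentially turning the one-line hint in the main text into an explicit computation. First I would invoke Lemma~\ref{lemma:rayleigh} on the Gram matrix $\mathbf{X}^\top\mathbf{X}$, which is symmetric positive semidefinite with eigenvalues $\sigma_j^2$. The lemma gives
\[
\sigma_1^2 \;=\; \max_{\|\mathbf{v}\|=1}\mathbf{v}^\top\mathbf{X}^\top\mathbf{X}\mathbf{v} \;=\; \max_{\|\mathbf{v}\|=1}\|\mathbf{X}\mathbf{v}\|^2,
\]
so $\sigma_1^2 \geq \|\mathbf{X}\mathbf{v}\|^2$ for any fixed unit test vector $\mathbf{v}$.

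Next I would pick the test direction $\mathbf{v} = \mathbf{x}_0/\|\mathbf{x}_0\|$, motivated by the PCA-style intuition in the main text that a massive row should pull the dominant right singular direction toward itself. Since the rows of $\mathbf{X}$ are the token vectors $\mathbf{x}_i$, the $i$-th entry of $\mathbf{X}\mathbf{v}$ equals $\mathbf{x}_i^\top\mathbf{v} = \|\mathbf{x}_i\|\cos\theta_i$, and therefore
\[
\|\mathbf{X}\mathbf{v}\|^2 \;=\; \sum_{i=0}^{T-1} \|\mathbf{x}_i\|^2\cos^2\theta_i.
\]

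Finally, I would split off the $i=0$ term: since $\theta_0 = 0$, it contributes $\|\mathbf{x}_0\|^2 = M$. By the definition of $\alpha$, the remainder is $\sum_{i\neq 0}\|\mathbf{x}_i\|^2\cos^2\theta_i = \alpha R$. Combining the two pieces gives $\sigma_1^2 \geq M + \alpha R$, which is exactly the claimed bound.

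The technical content is very light, so the main ``obstacle'' is conceptual rather than computational: the choice of test direction $\mathbf{x}_0/\|\mathbf{x}_0\|$ is non-canonical and only sharp when this direction is close to the true top right singular vector. This happens precisely in the massive-activation regime ($M \gg R$, or the remaining rows well-aligned with $\mathbf{x}_0$ so $\alpha \to 1$), in which $\mathbf{X}$ approaches a rank-one matrix aligned with $\mathbf{x}_0$, matching the intuition flagged before the theorem and explaining why Corollary~\ref{cor:compression} yields meaningful spectral, anisotropy, and entropy bounds in exactly that regime.
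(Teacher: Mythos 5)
Your proof is correct and matches the paper's argument essentially exactly: both use the Rayleigh-quotient lower bound from Lemma~\ref{lemma:rayleigh} evaluated at the test direction $\mathbf{x}_0/\|\mathbf{x}_0\|$, then expand $\|\mathbf{X}\mathbf{v}\|^2$ row-by-row and split off the $i=0$ term. The only cosmetic difference is that you normalize $\mathbf{v}$ before plugging in while the paper keeps the $\|\mathbf{x}_0\|^2$ denominator explicit in the quotient; the computation is identical.
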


\begin{proof}
By definition of the singular value (also see Lemma \ref{lemma:rayleigh}),
\[
\sigma_1^2 \geq \frac{\rvx_0^\top \rmX^\top \rmX\rvx_0}{\rvx_0^\top \rvx_0} = \frac{||\rmX\rvx_0||^2}{||\rvx_0||^2} = \frac{1}{||\rvx_0||^2}\sum_{i=0} \langle \rvx_i,\rvx_0 \rangle^2 = ||\rvx_0||^2 + \sum_{i\neq 0}\frac{\langle \rvx_i,\rvx_0 \rangle^2}{||\rvx_0||^2}.
\]
Using $\langle \rvx_i,\rvx_0\rangle^2 = ||\rvx_0||^2||\rvx_i||^2\cos^2\theta_i$, we obtain
\[
\sigma_1^2 \ge \|\rvx_0\|^2 + \sum_{i\ne 0}\|\rvx_i\|^2\cos^2\theta_i
\]
Since \(\alpha R=\sum_{i\ne 0}\|\rvx_i\|^2\cos^2\theta_i\), we get $\sigma_1^2\ge M+\alpha R$, which is the desired result. \qedhere
\end{proof}

As mentioned in the main text, Theorem~\ref{thm:rayleigh-alpha} makes precise how two independent factors govern the rise of \(\sigma_1^2\): (i) the magnitudes of the activations \(M\), and (ii) the \emph{alignment} \(\alpha\) of the remaining rows with \(\rvx_0\). If representations were totally aligned, then $\rmX$ would indeed be rank one and would have one singular value given by $\sigma_1^2(\rmX) = M + R = ||\rmX||_F^2$. Conversely, even with small \(\alpha\) (say, when token representations are not aligned or even orthogonal), a large norm M suffices to grow \(\sigma_1^2\). Empirically, we observe the term $||\rvx_0||^2$ making the most impact in our analysis, as we know it will be orders of magnitude larger than the rest of norms, however keeping the alignment term is also important for the following results. 

We move onto proving Corollary \ref{cor:compression}, which we split in three in this section.
\begin{corollary}[Singular value dominance]\label{cor:dominance}
In the setting of Theorem \ref{thm:rayleigh-alpha}, let $c =||\rvx_0||^2 / \sum_{i\neq 0} ||\rvx_i||^2$, then
\[
\sigma_1^2
\;\ge\;
\left(\frac{c+\alpha}{1-\alpha}\right)\sum_{j\ge 2}\sigma_j^2.
\]
\end{corollary}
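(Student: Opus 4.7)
The plan is to derive the corollary as a direct algebraic consequence of Theorem~\ref{thm:rayleigh-alpha} combined with the Frobenius identity for singular values. The core observation is that a lower bound on $\sigma_1^2$ automatically translates into an upper bound on $\sum_{j\ge 2}\sigma_j^2$, and the ratio of these two bounds gives exactly the quantity in the statement.

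First, I would invoke Theorem~\ref{thm:rayleigh-alpha} to obtain $\sigma_1^2 \ge M + \alpha R$. Next, I would apply the standard identity $\sum_{j\ge 1}\sigma_j^2 = \|\rmX\|_F^2 = \sum_i\|\rvx_i\|^2 = M + R$, which follows from writing $\|\rmX\|_F^2 = \Tr(\rmX^\top\rmX)$ and noting that the trace equals the sum of eigenvalues of $\rmX^\top\rmX$. Subtracting $\sigma_1^2$ from both sides and using the bound from Theorem~\ref{thm:rayleigh-alpha} then yields
\begin{equation*}
\sum_{j\ge 2}\sigma_j^2 \;=\; (M+R) - \sigma_1^2 \;\le\; (M+R) - (M+\alpha R) \;=\; (1-\alpha)R.
\end{equation*}

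Finally, I would combine the lower bound $\sigma_1^2 \ge M + \alpha R = (c+\alpha)R$ (using $M = cR$) with the upper bound $\sum_{j\ge 2}\sigma_j^2 \le (1-\alpha)R$ to conclude
\begin{equation*}
\frac{\sigma_1^2}{\sum_{j\ge 2}\sigma_j^2} \;\ge\; \frac{(c+\alpha)R}{(1-\alpha)R} \;=\; \frac{c+\alpha}{1-\alpha},
\end{equation*}
which rearranges to the desired inequality. The $R$ factors cancel cleanly, so no further manipulation is needed.

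There is no substantive obstacle here; the argument is just a two-line consequence of the previous theorem and the Pythagorean identity for the spectrum. The only subtle point worth flagging is the degenerate case $\alpha = 1$: the alignment is maximal precisely when every $\rvx_i$ is colinear with $\rvx_0$, so $\rmX$ becomes rank one and $\sum_{j\ge 2}\sigma_j^2 = 0$. In that regime the inequality should be interpreted as vacuous (both sides tend to $+\infty \cdot 0$), and a brief remark assuming $\alpha < 1$ would make the statement rigorous without affecting the quantitative content in the massive-activation regime, which is the regime of interest.
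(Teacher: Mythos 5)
Your proof is correct and follows exactly the same route as the paper: invoke Theorem~\ref{thm:rayleigh-alpha} for the lower bound on $\sigma_1^2$, use the Frobenius identity $\|\rmX\|_F^2 = M + R$ to deduce $\sum_{j\ge 2}\sigma_j^2 \le (1-\alpha)R$, and divide. Your remark about the degenerate case $\alpha = 1$ (where $\sum_{j\ge2}\sigma_j^2 = 0$ and the bound is vacuous) is a small but genuine point of rigor the paper leaves implicit.
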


\begin{proof}
From Theorem \ref{thm:rayleigh-alpha}, \(\sigma_1^2\ge M+\alpha R\). Moreover, $\sum_{j\geq 2} \sigma_j^2 = ||\rmX||_F^2 - \sigma_1^2 \leq ||\rmX||_F^2 - (M+\alpha R)= R - \alpha R = (1-\alpha)R$. Therefore one gets:
\[
\frac{\sigma_1^2}{\sum_{j\ge 2}\sigma_j^2}\geq \frac{M + \alpha R}{||\rmX||_F^2 - (M + \alpha R)}= \frac{M+\alpha R}{(1-\alpha)R} = \frac{\frac{M}{R} + \alpha}{1 - \alpha} = \frac{c + \alpha}{1 - \alpha}.
\]
\end{proof}

\begin{corollary}[Anisotropy]\label{cor:anisotropy}
Let $p_1 = \sigma_1^2/||\rmX||_F^2$ denote the anisotropy. In the setting of \ref{thm:rayleigh-alpha}, 
\[
p_1 \;\ge\; \frac{M+\alpha R}{M+R}
\;=\; \frac{c+\alpha}{c+1}.
\]
\end{corollary}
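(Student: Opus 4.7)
The plan is to combine the spectral lower bound from Theorem \ref{thm:rayleigh-alpha} with the identity for the Frobenius norm of $\rmX$ in terms of the row norms, and then rewrite the resulting ratio in terms of the normalized parameter $c=M/R$.

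First, I would recall that $\|\rmX\|_F^2 = \sum_i \|\rvx_i\|^2 = M+R$, which follows from the fact that the squared Frobenius norm decomposes as the sum of squared row norms. This identity is what ties the denominator of $p_1$ to the quantities appearing in Theorem \ref{thm:rayleigh-alpha}. Since $p_1 = \sigma_1^2/\|\rmX\|_F^2$ and the denominator is a fixed positive quantity independent of any lower bound on $\sigma_1^2$, monotonicity of division by a positive constant lets me translate the inequality $\sigma_1^2 \geq M+\alpha R$ directly into a lower bound on $p_1$.

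Next I would divide both sides of Theorem \ref{thm:rayleigh-alpha} by $M+R>0$ to obtain
\[
p_1 \;=\; \frac{\sigma_1^2}{M+R} \;\geq\; \frac{M+\alpha R}{M+R}.
\]
To match the second form in the statement, I would then divide numerator and denominator by $R>0$ and use the definition $c=M/R$ to get $(c+\alpha)/(c+1)$, which is the claimed bound.

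The argument is essentially a single substitution plus a renormalization, so there is no real obstacle. The only mildly nontrivial point worth remarking on is that the bound is monotone in both $c$ and $\alpha$: as $c\to\infty$ the ratio tends to $1$ regardless of alignment, and for any fixed $c$ the ratio increases linearly in $\alpha$. This confirms the interpretation after Theorem \ref{thm:rayleigh-alpha} that either a dominant norm or strong alignment of the remaining rows with $\rvx_0$ suffices to push anisotropy towards $1$, recovering the rank-one regime in the limit $\alpha\to 1$ where $p_1\geq 1$ (and hence $p_1=1$) as expected.
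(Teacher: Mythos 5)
Your proof is correct and takes essentially the same route as the paper's: divide the lower bound $\sigma_1^2 \geq M+\alpha R$ from Theorem~\ref{thm:rayleigh-alpha} by $\|\rmX\|_F^2 = M+R$, then normalize by $R$ to express the result in terms of $c = M/R$.
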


\begin{proof}
Divide \(\sigma_1^2\ge M+\alpha R\) by \(||\rmX||_F^2=M+R\), therefore:
\[
p_1\geq \frac{M + \alpha R}{M + R} = \frac{\frac{M}{R} + \alpha}{\frac{M}{R}  + 1} = \frac{c+\alpha}{c + 1}. \qedhere
\]
\end{proof}

As mentioned in the main text, Corollaries ~\ref{cor:dominance}, \ref{cor:anisotropy} lower-bound the dominance ratio and anisotropy using only \((c,\alpha)\). Thus, either increasing \(c\) (stronger massive activation) or increasing \(\alpha\) (stronger alignment) provably inflates the spectral gap. In both cases, having perfect alignment with $\rvx_0$ or having $||\rvx_0||^2$ grow with respect to the rest, forces extreme values. If $\alpha \to 1$, then $\frac{c+\alpha}{1-\alpha}\to \infty$ and $\frac{c+\alpha}{c+1}\to 1$, intuitively because only one direction becomes relevant in the data. Moreover, as the massive activation grows $c \to \infty$, the same result holds. Notice that $c$ is the ratio between the massive activation and the rest of them, therefore $c$ increases by letting $\rvx_0$ be larger in norm, but also letting the rest of representations have low norm.

\begin{corollary}[Shannon matrix-based entropy]\label{cor:entropy}
Let \(p_j:=\sigma_j^2/||\rmX||_F^2\) denote the normalized distribution of singular values of $\rmX$. Let \(H(\rmX):=-\sum_{j=1}^r p_j\log p_j\) be the Shannon entropy of such distribution.
Let \(p:=\dfrac{c+\alpha}{c+1}\). Then, we have the following bound
\[
H(\rmX)\;\le\; -p\log p \;-\; (1-p)\log(1-p) \;+\; (1-p)\log(r-1),
\]

\end{corollary}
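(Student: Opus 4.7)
The statement is a maximum-entropy argument built on top of Corollary~\ref{cor:anisotropy}, which already gives us $p_1 \geq p$. The plan is to upper-bound $H(\rmX)$ in two steps: first, hold $p_1$ fixed and maximize over the remaining coordinates $(p_2,\dots,p_r)$; second, transfer the resulting upper bound from $p_1$ to the cleaner quantity $p$ using monotonicity.

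\textbf{Step 1: fix $p_1$ and maximize over the tail.} Since the vector $(p_2,\dots,p_r)$ lies in the simplex of total mass $1-p_1$ with $r-1$ coordinates, and $-x\log x$ is strictly concave, the entropy contribution $-\sum_{j\geq 2} p_j\log p_j$ is maximized when $p_j = (1-p_1)/(r-1)$ for all $j\geq 2$ (this is standard, either by Jensen or by Lagrange multipliers). Plugging in gives
\[
H(\rmX)\;\le\; f(p_1) := -p_1\log p_1 - (1-p_1)\log(1-p_1) + (1-p_1)\log(r-1).
\]

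\textbf{Step 2: transfer the bound from $p_1$ to $p$.} Differentiating,
\[
f'(q) \;=\; \log\!\left(\frac{1-q}{q(r-1)}\right),
\]
so $f$ is strictly decreasing on $[1/r,1]$, with maximum $\log r$ at $q=1/r$. In the regime of interest $p \geq 1/r$ (which is exactly when the bound is informative, and always the case when massive activations are present since $p = (c+\alpha)/(c+1)$ grows with $c$). Combining $p_1 \geq p$ from Corollary~\ref{cor:anisotropy} with the monotonicity gives $f(p_1) \leq f(p)$, which is the desired bound. If $p < 1/r$, the claimed bound exceeds $\log r$ and is therefore vacuous, so no additional argument is needed.

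\textbf{Main obstacle.} The mathematics is routine; the only care point is the interplay between the two directions of monotonicity. Specifically, one must confirm that tightening $p_1$ upward (as allowed by Corollary~\ref{cor:anisotropy}) indeed tightens the entropy upper bound $f(p_1)$ downward, which requires being on the decreasing branch of $f$. Noting that the entire meaningful regime sits at $p_1 \geq p \geq 1/r$ resolves this cleanly, so the proof collapses to Step~1 (a standard Jensen argument) followed by a one-line monotonicity check on $f$.
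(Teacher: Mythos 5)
Your Step 1 (max entropy on the tail with $p_1$ fixed) and Step 2 (monotonicity of $f$ on $[1/r,1]$) are correct and match the spirit of the paper's proof, but with a cleaner organization. The paper instead bounds the two pieces of $H(\rmX)$ separately: it replaces $-p_1\log p_1$ by $-p\log p$ (which needs $-x\log x$ decreasing, hence tacitly $p \geq 1/e$) and replaces the tail maximizer $(1-p_1)/(r-1)$ by $(1-p)/(r-1)$ (its own monotonicity check). Factoring through the single function $f$ and checking $f' < 0$ on $(1/r,1]$, as you do, is tighter — it identifies the exact threshold $1/r$ rather than $1/e$ — and avoids tracking two independent monotonicity conditions.

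However, your dismissal of the $p < 1/r$ case is backwards and contradicts your own computation. You correctly note that $f$ attains its \emph{maximum} value $\log r$ at $q = 1/r$; therefore for $p < 1/r$ (where $f$ is strictly increasing) we have $f(p) < \log r$, not $f(p) > \log r$. So the claimed bound is strictly \emph{tighter} than the trivial $H(\rmX) \le \log r$ in that regime, hence not vacuous. Moreover, in that regime the constraint $p_1 \geq p$ from Corollary~\ref{cor:anisotropy} is inactive (one always has $p_1 \geq 1/r > p$), so the constrained max-entropy value is $\log r$, and the chain $H(\rmX) \le f(p_1) \le f(p)$ does not close: $f(p_1) \le f(1/r) = \log r$, but $f(p) < \log r$, and nothing forces $f(p_1) \le f(p)$. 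This is a genuine gap in the derivation for $p < 1/r$ — a gap the paper's own proof shares (its first inequality $-p_1\log p_1\le -p\log p$ also fails when $p < 1/e$) — so the corollary as written should really carry the hypothesis $p \ge 1/r$, or the small-$p$ case needs a separate argument. In the regime of interest (massive activations, $c \gg 1$, $p$ near $1$) this is moot, which is probably why neither you nor the paper noticed, but the ``vacuous'' escape hatch as you stated it is simply wrong.
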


\begin{proof}
Let \[
H(\rmX) = -p_1\log p_1 - \sum_{j=2}^r p_j\log p_j \leq -p\log p - \sum_{j=2}^r p_j\log p_j,
\]
so we need to bound the second term, which is the entropy of $r-1$ terms adding up to $1 - p_1 \leq 1 - p$. This term would be maximised if the mass was equally distributed, that is, $p_j = \frac{1 - p_1}{r-1} \leq \frac{1-p}{r-1}$. Therefore, one gets 
\[
 - \sum_{j=2}^r p_j\log p_j \leq -\sum_{j=2}^r \frac{1-p}{r-1}\log\left(\frac{1-p}{r-1}\right) = -(1-p)\log \left(\frac{1-p}{r-1}\right).
\]
The result is obtained combining these two bounds.
\end{proof}

For fixed top mass \(p_1\geq p\), entropy is maximized when the remaining mass \(1-p\) is spread uniformly over the other \(r-1\) singular values; the bound above is exactly that maximum. Consequently, any additional structure in the tail (e.g., a second spike) will lower the true entropy beneath this upper bound. Notice for $c\to\infty$ or $\alpha \to 1$, $p\to 1$ and the upper bound approaches $0$.

\paragraph{Limitations of this analysis.} In the theoretical analysis conducted above, we only considered one massive activation placed on the \textsc{bos} token. In practice, models may exhibit more than one massive activation \citep{sun2024massive}. In this case, our $c$ term would make the bounds more permissive. We believe this poses no problem to our overall message and that this analysis can be extended. One can suppose the first $n$ tokens to be the massive activations and decompose $\rmX = \sum_{i=0}^{n-1}\rve_i \rvx_i^\top + \rmY$ such that the first summand can be of rank at most $n$ and $\rmY$ a small perturbation in comparison, leading to small entropy (effective rank $\leq n$), also holding for longer context lengths.

\clearpage

\section{Additional results}\label{app:additional-results}

\paragraph{Experimental details.} All experiments were implemented in PyTorch  
using NVIDIA A100 GPUs with 40GB memory or NVIDIA H100 GPUs with 80GB when the memory requirements were stronger. We examined pretrained models of varying depths, using HuggingFace repositories with Transformers and Transformer-Lens \citep{nanda2022transformerlens}. When large datasets were run to collect metrics such as sink rates and norms, prompts were truncated to a maximum length of 4096 tokens for the FineWeb-Edu experiment (Fig. \ref{fig:head-heatmaps-pythia}) and 1024 for the GSM8K experiment (Fig. \ref{fig:model-comparisons}), as the latter required singular value decompositions to compute the entropy. LogitLens experiments for multiple-choice-questions tasks were done with LM-Evaluation-Harness \citep{eval-harness}, implementing our own model wrapper to output hidden states at each layer instead of final ones.

\paragraph{Pearson Correlations.} To assess the dynamical relationship between \textsc{bos} norm, matrix-based entropy, and \textsc{bos} sink rate across layers, we computed correlations on their layerwise changes. For each model and metric, the trajectory across layers was first $z$-scored, and then we defined the delta at layer $\ell$ as the difference with respect to the preceding layer,
\[
\Delta \tilde{b}_\ell = \tilde{b}_\ell - \tilde{b}_{\ell-1}, \quad
\Delta \tilde{e}_\ell = \tilde{e}_\ell - \tilde{e}_{\ell-1}, \quad
\Delta \tilde{s}_\ell = \tilde{s}_\ell - \tilde{s}_{\ell-1}.
\]
This procedure emphasizes abrupt layerwise changes rather than absolute values, which is crucial because \textsc{bos} norm often exhibits sharp spikes that coincide with collapses in entropy and the subsequent emergence of attention sinks. We then measured Pearson correlation coefficients between $\Delta \tilde{b}_\ell$ and $\Delta \tilde{e}_\ell$ (\textsc{bos} norm vs entropy, same layer) and between $\Delta \tilde{b}_\ell$ and $\Delta \tilde{s}_{\ell+1}$ (\textsc{bos} norm vs sink rate, lagged by one layer). Correlations were computed separately per model and summarized across models by Fisher $z$-transform averaging, reporting the mean correlation and the standard deviation across models.

\paragraph{Limitations.}
We outline some limitations of our work. Our analysis focuses on decoder-only Transformers and primarily attributes both sinks and compression to \textsc{bos}-centered massive activations; models with alternative positional schemes, attention sparsity patterns, or special-token conventions (e.g., no explicit \textsc{bos} token, sinks in different positions or ALiBi encodings) may exhibit different dynamics. Our causal claims use targeted MLP ablations on selected layers and model families, however, we observe model-dependent exceptions (e.g., sinks persisting despite decompression). Lastly, the theory assumes a single massive row, whereas real models may feature multiple interacting massive activations. However, as discussed in Appendix \ref{app:proofs}, we believe this poses no harm to the overall message: a few massive activations would push the representations to a lower-dimensional subspace, but not necessarily of dimension 1. 

\subsection{Broader Analysis of Models}\label{app:broader-analysis}
In this section, we provide broader validation of our three-phase theory across model families and model sizes. Moreover, we expand on the empirical measurements of metrics from our theoretical analysis, on the ablation of MLPs and provide two notes on the specifics of the GPT OSS model and Gemma 7B.

\paragraph{Validation on more datasets.} The GSM8K dataset use in Figure \ref{fig:model-comparisons} only contains structured question-answer pairs that we concatenate to obtain each prompt. In order to test our theory against different distributions of data, we further evaluate these models on 5K examples from the following datasets: The Stack \citep{thestack}, for code; FineWeb \citep{fineweb}, for prose and the MATH dataset \citep{hendrycksmath2021}, for math questions. The results are shown in Figures \ref{fig:model-comparison-stack},\ref{fig:model-comparison-fineweb} and \ref{fig:model-comparison-math}. We find that the results to be very similar to those of GSM8K, further reinforcing the input-independence of our theory.

\begin{figure}[h!]
    \centering
    \includegraphics[width=\linewidth]{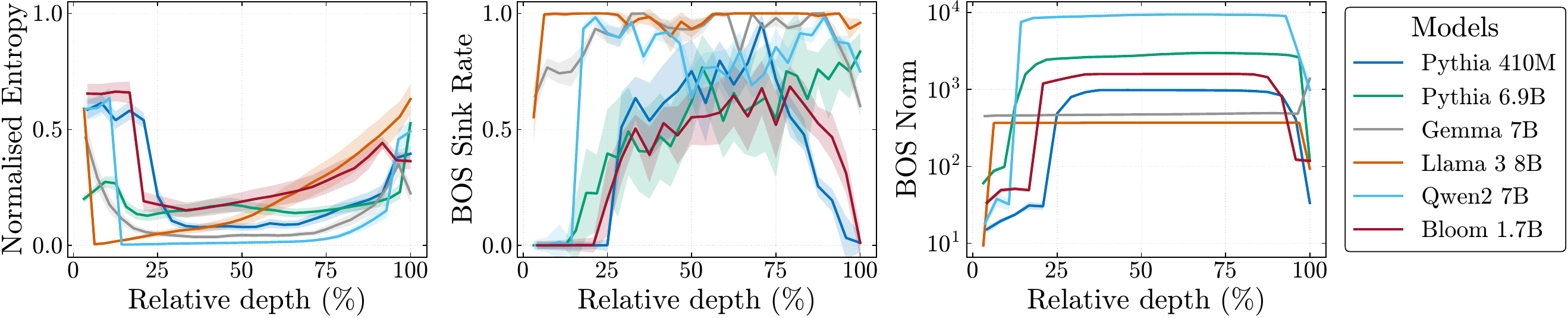}
    \caption{ Entropy, sink rate and \textsc{bos} in The Stack.}
    \label{fig:model-comparison-stack}
\end{figure}

\begin{figure}[h!]
    \centering
    \includegraphics[width=\linewidth]{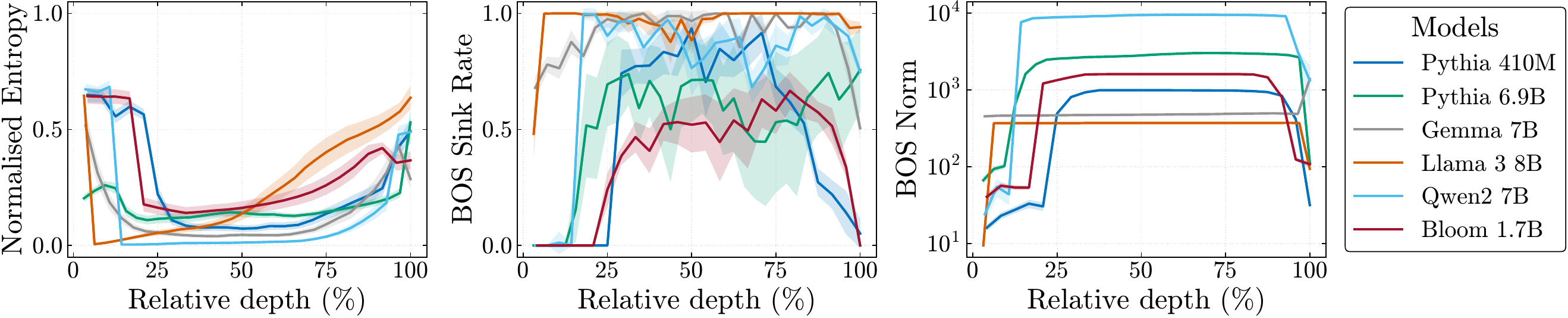}
    \caption{ Entropy, sink rate and \textsc{bos} in FineWeb.}
    \label{fig:model-comparison-fineweb}
\end{figure}

\begin{figure}[h!]
    \centering
    \includegraphics[width=\linewidth]{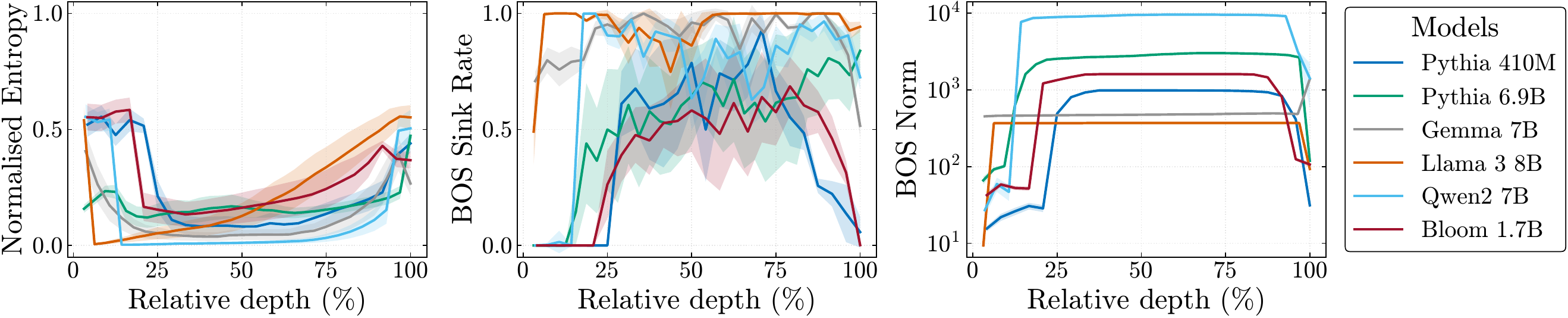}
    \caption{ Entropy, sink rate and \textsc{bos} in the MATH dataset.}
    \label{fig:model-comparison-math}
\end{figure}

\paragraph{Validation on more model families and large models.} To further validate our Mix-Compress-Refine theory we observe the emergence of compression, attention sinks and massive activations in the Pythia model family (Fig. \ref{fig:pyth-family}) and in very large models (70B-120B), specifically LLaMA3 70B, Qwen2 72B and GPT OSS 120B (Fig. \ref{fig:large-models}). The prompt is a single GSM8K example. GPT OSS' particular sink patterns are explained later in this section. We believe this showcases our observed correlations are a universal phenomena in LLMs.

\begin{figure}[h!]
    \centering
    \includegraphics[width=\linewidth]{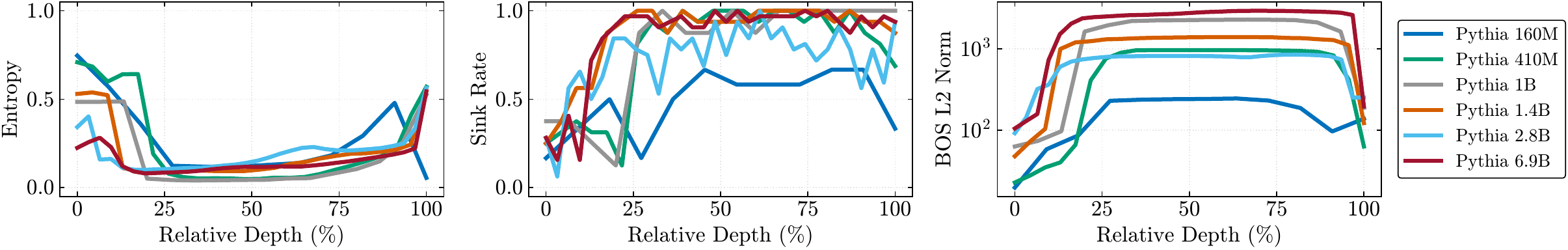}
    \caption{Entropy, sink rate and \textsc{bos} norm for the Pythia family of models.}
    \label{fig:pyth-family}
\end{figure}

\begin{figure}[h!]
    \centering
    \includegraphics[width=\linewidth]{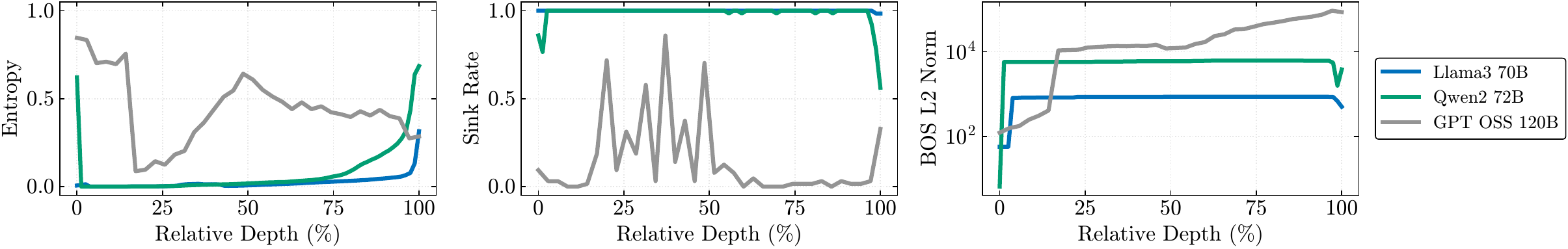}
    \caption{Entropy, sink rate and \textsc{bos} norm for 70B-120B models.}
    \label{fig:large-models}
\end{figure}

\paragraph{Training Checkpoints.} We evaluated the training dynamics of the Pythia 410M/6.9B/12B models across multiple checkpoints (steps 1, 1k, 2k, 4k, 8k, 10k, 20k, 30k, and 143k). At each checkpoint, after every layer we recorded the entropy, \textsc{bos} sink rate (threshold $\tau=0.3$) and the norm of the \textsc{bos} token representation. The prompt was a single GSM8K prompt ``\texttt{Janet's ducks lay 16 eggs...}'' Figure \ref{fig:pythia12b-checkpoints} illustrates the results for the Pythia  12B model.
\iffalse
\begin{figure}[h!]
    \centering
    \includegraphics[width=0.9\linewidth]{figures/section3/checkpoints/pythia-6.9b_ent-sink-bos_triple_from_csv.pdf}
    \caption{Entropy, Sink rate and \textsc{bos} Norm at different training checkpoints for Pythia 6.9B.}
    \label{fig:pythia6.9b-checkpoints}
\end{figure}
\fi
\begin{figure}[h!]
    \centering
    \includegraphics[width=0.9\linewidth]{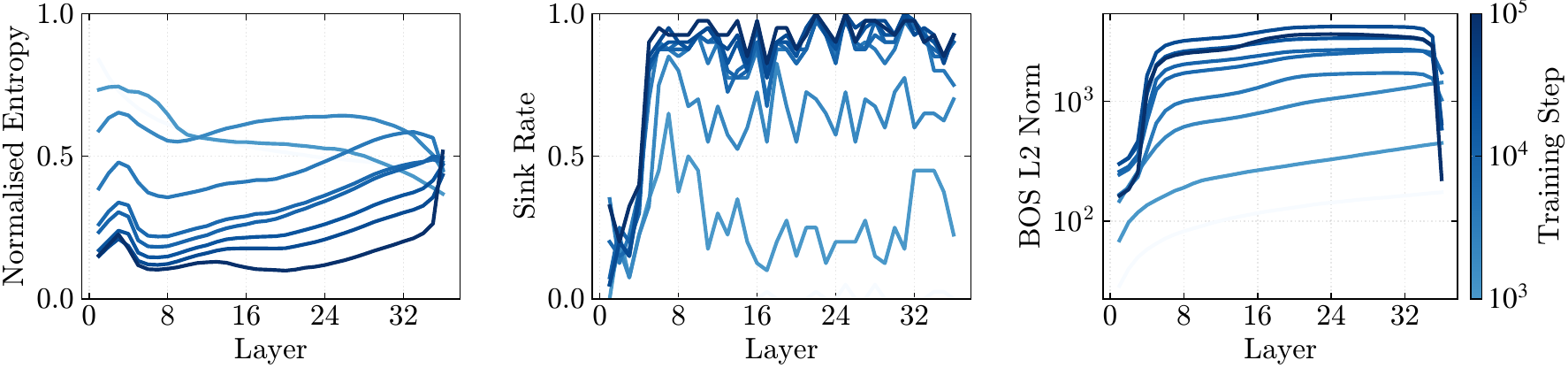}
    \caption{Entropy, Sink rate and \textsc{bos} Norm at different training checkpoints for Pythia 12B.}
    \label{fig:pythia12b-checkpoints}
\end{figure}

\paragraph{Role of $\alpha$ in practice and visualization of theoretical results.} The alignment term $\alpha$ is added in the theoretical analysis for mathematical completeness. If every other token is closely aligned with the \textsc{bos} token, then $\text{rank}(\rmX)\approx 1$ and the compression follows trivially. Of course, this is not expected to happen in practice. One can perform the same analysis without $\alpha$ using the (weaker) bound $\sigma_1^2 \geq M$ and the corresponding dominance, anisotropy and entropy bounds. The bounds in that case become $\sigma_1^2 \geq c\sum_{j=2}^r\sigma_j^2$, $p_1 \geq c / (c+1)$ and similarly for the entropy if one instead uses the new bound for the anisotropy. We provide plots with the bounds from the theoretical discussion in Section \ref{sec:theory} and their weaker versions. Figures \ref{fig:theory-pythia} and \ref{fig:theory-llama} show these values for LLaMA3 8B and Pythia 410M. We show (1) the terms $M= ||\rvx_\text{BOS}||^2$, $\alpha R$ and $M + R = ||\rmX||_F^2$ from Theorem \ref{thm:main}, (2) the top 3 singular values $\sigma_i^2$ and the sum $\sum_{i\geq 1}\sigma_i^2$ and (3,4,5) the dominance, anisotropy and entropy bounds from Corollary \ref{cor:compression}. In all cases, we observe the bounds being tight in the middle layers. In this regime, the first singular value $\sigma_1^2$ follows the trajectory of $||\rvx_\text{BOS}||^2$ closely and dominates the rest of the singular values. The dominance decreases steadily, specially towards the second half of the network, indicating the preparation for next token prediction of Phase 3. Lastly, one observes that the weaker version of the bounds is already very tight, showing the importance of the massive activation.

\begin{figure}[h!]
    \centering
    \includegraphics[width=\linewidth]{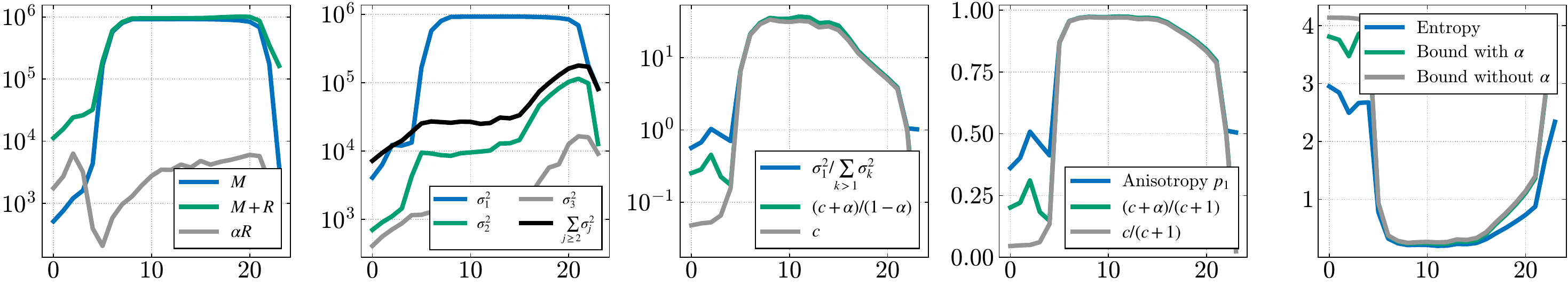}
    \caption{Theoretical bounds for Pythia 410M. {\textbf{(a)} shows the elements in the bound of Theorem \ref{thm:main}. \textbf{(b)} shows the singular value spectrum of $\mathbf{X}^{(\ell)}$ at each layer $\ell$. For plots \textbf{(c,d,e)} gray represents the weaker bound (no $\alpha$), while green represents the bound as presented in Corollary \ref{cor:compression}}.} 
    \label{fig:theory-pythia}
\end{figure}

\begin{figure}[h!]
    \centering
    \includegraphics[width=\linewidth]{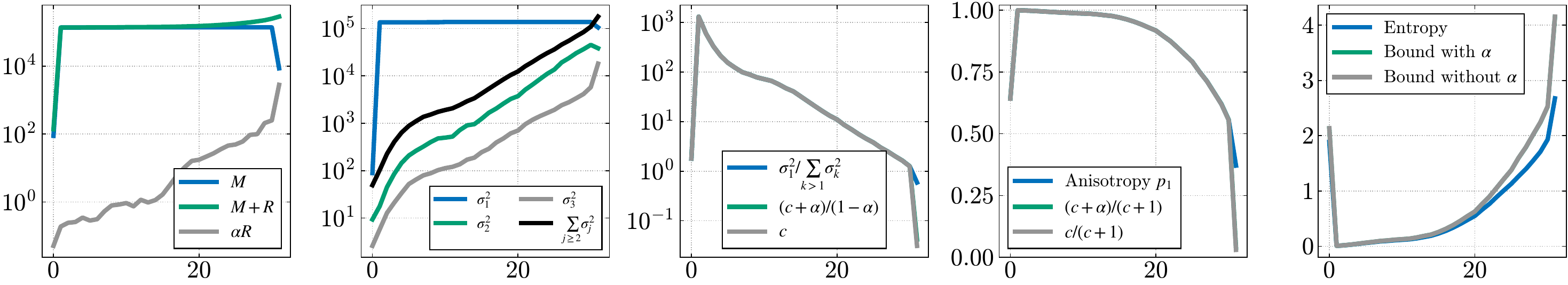}
    \caption{Theoretical bounds for LLaMA3 8B. {\textbf{(a)} shows the elements in the bound of Theorem \ref{thm:main}. \textbf{(b)} shows the singular value spectrum of $\mathbf{X}^{(\ell)}$ at each layer $\ell$. For plots \textbf{(c,d,e)} gray represents the weaker bound (no $\alpha$), while green represents the bound as presented in Corollary \ref{cor:compression}}.}
    \label{fig:theory-llama}
\end{figure}

\paragraph{MLP ablations.} We further run the targeted MLP ablations on more models to erase the appearance of the massive activation. For LLaMA3 8B, we ablate layer 0; for Qwen2 7B, we ablate layers 3 and 4 and for Pythia 410M, we ablate layers 0 and 5-7. The results are shown in Figures \ref{fig:massive-ablations}, \ref{fig:mlp-ablations-qwen}, \ref{fig:mlp-ablations-pythia}.
\iffalse
\begin{figure}[h!]
    \centering
    \includegraphics[width=\linewidth]{}
    \caption{MLP ablations for LLaMA3 8B, Qwen2 7B and Pythia 410M.}
    \label{fig:mlp-ablations}
\end{figure}
\fi

\begin{figure}[h!]
    \centering
    \includegraphics[width=\linewidth]{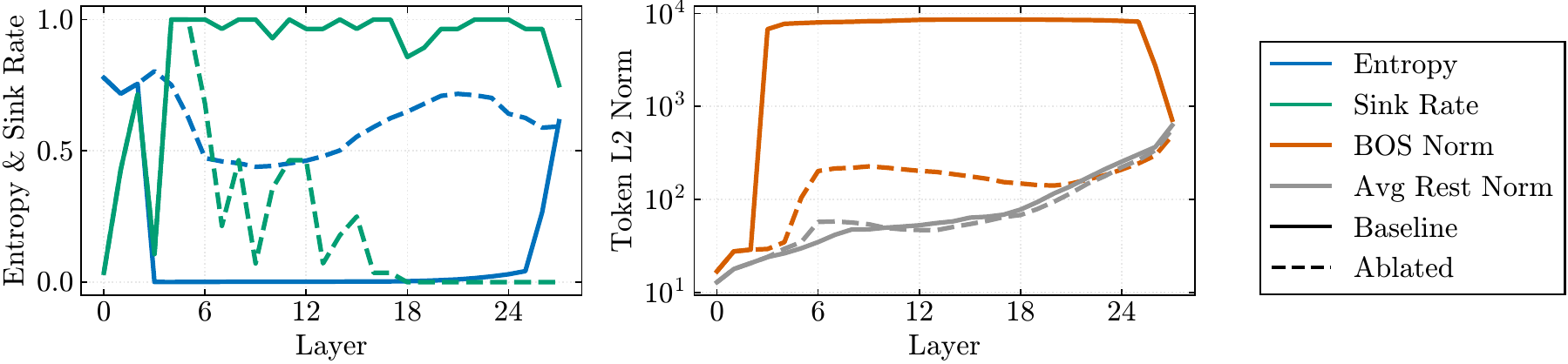}
    \caption{MLP ablation for  Qwen2 7B.}
    \label{fig:mlp-ablations-qwen}
\end{figure}

\begin{figure}[h!]
    \centering
    \includegraphics[width=\linewidth]{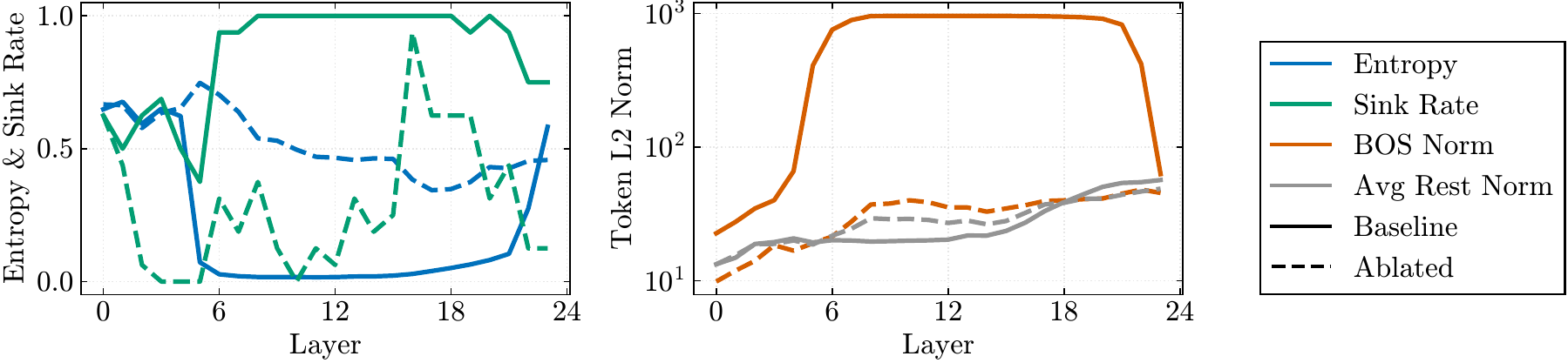}
    \caption{MLP ablations for Pythia 410M.}
    \label{fig:mlp-ablations-pythia}
\end{figure}

\paragraph{Additional Ablations on Massive Activations}

Here, we also run ablation regarding the emergence and exact effect of massive activation on both compression and attention sinks on Pythia 410M. In particular, Figure \ref{fig:massive-emergence} shows what components of the model cause the emergence of a massive activation on the BoS token. On the other hand, Figure \ref{fig:massive-controlled-ablation} shows the effect of clipping the maximum value in the massive activation vector to be between $[-\gamma, \gamma]$.

\begin{figure}[h!]
    \centering
    \includegraphics[width=0.3\linewidth]{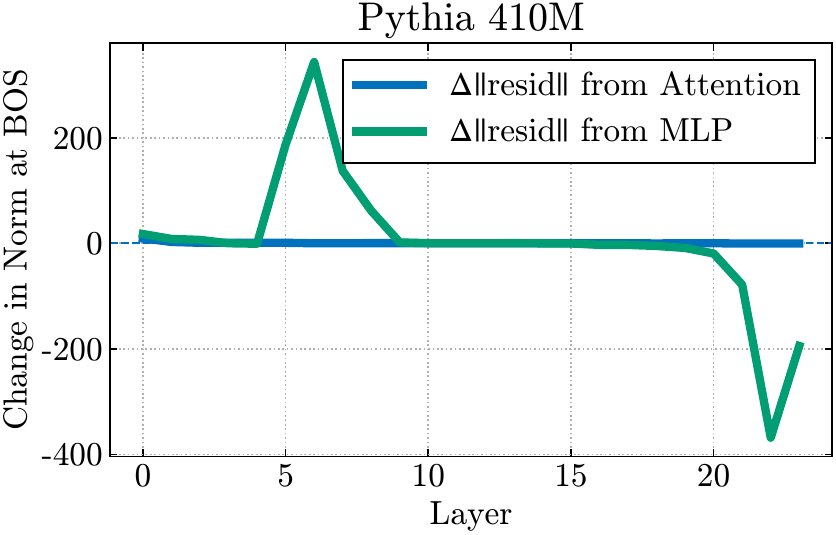}
    \caption{MLPs are responsible for the appearance of massive activations in Pythia 410M.}
    \label{fig:massive-emergence}
\end{figure}
\begin{figure}[h!]
    \centering
    \includegraphics[width=0.9\linewidth]{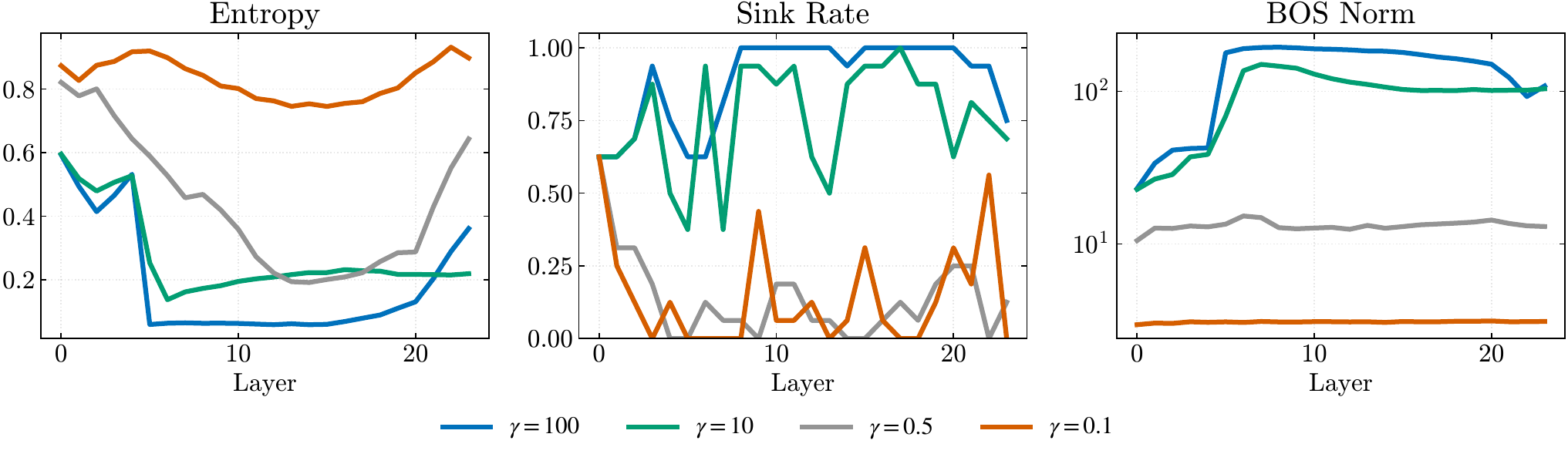}
    \caption{Ablation of the effects of massive activations in the residual pathway on Pythia 410M. Lower massive activation norms generally correspond to both increased compression and lower sinks.}
    \label{fig:massive-controlled-ablation}
\end{figure}

From the Figures, one can see that the MLP seems to be causing the emergence of massive activations and how lowering the norm of the massive activation corresponds to an increase the in the representational entropy and decrease in attention sinks. In general, we find attention sinks to be less predictable than entropy (perhaps due to the way the sink score is calculated), although there is a general downward trend as the massive activation norm is reduced.

\iffalse
\paragraph{Norm equalizations.} Across models, we find that the average norm of the rest of the tokens (excluding \textsc{bos}) grows monotonically with depth, while the \textsc{bos} norm grows abruptly with the massive activation, remains constant in the middle layers and drops at the last layers. Figure \ref{fig:norm-equalizations} illustrates this process for three models. As the rest of tokens become closer to the \textsc{bos} norm, the dominance of the first singular value is weaker, allowing for representations to decompress.

\begin{figure}[h!]
    \centering
\subfloat{\includegraphics[width=0.3\linewidth]{figures/section4/token_norms/pythia-410m_toknorms_bos_vs_rest_Janet_s_ducks_lay_16_eggs_per_day_She_eats_three_for_breakfa.pdf}}
\quad
\subfloat{\includegraphics[width=0.3\linewidth]{figures/section4/token_norms/meta-llama-3-8b_toknorms_bos_vs_rest_Janet_s_ducks_lay_16_eggs_per_day_She_eats_three_for_breakfa.pdf}}
\quad
\subfloat{\includegraphics[width=0.3\linewidth]{figures/section4/token_norms/qwen2-7b_toknorms_bos_vs_rest_Janet_s_ducks_lay_16_eggs_per_day_She_eats_three_for_breakfa.pdf}}
    \caption{LLMs equalize token norms towards at the end of the network.}
    \label{fig:norm-equalizations}
\end{figure}

\fi

\paragraph{A note on GPT OSS.} 
In the GPT OSS \citep{openai2025gptoss120bgptoss20bmodel} family of models, each attention head is equipped with a learnable sink logit that allows it to divert probability mass away from real tokens, effectively providing a ``skip'' option. However, unlike the explicit \((k', v')\) bias formulation studied in \cite{sun2024massive,gu2025when}, GPT OSS does not include a learnable value sink token. This means the model cannot encode bias information directly through the sink, and we hypothesize that it instead continues to rely on massive activations at the \textsc{bos} token to implement bias-like behavior and generate compression. This explains why \textsc{bos} sink patterns are still observed, particularly in the middle layers (see Figure \ref{fig:gpt-oss}). The alternating spikes across layers may be a consequence of GPT OSS’ alternating dense and locally banded sparse attention pattern: in layers with local attention windows, heads are less able to access \textsc{bos}, while in subsequent dense layers \textsc{bos} becomes globally visible again, producing the observed oscillatory sinkness.

\begin{figure}[h!] % 13 ≈ number of text lines to wrap
    \centering
  \subfloat{
  \includegraphics[width=0.4\textwidth]{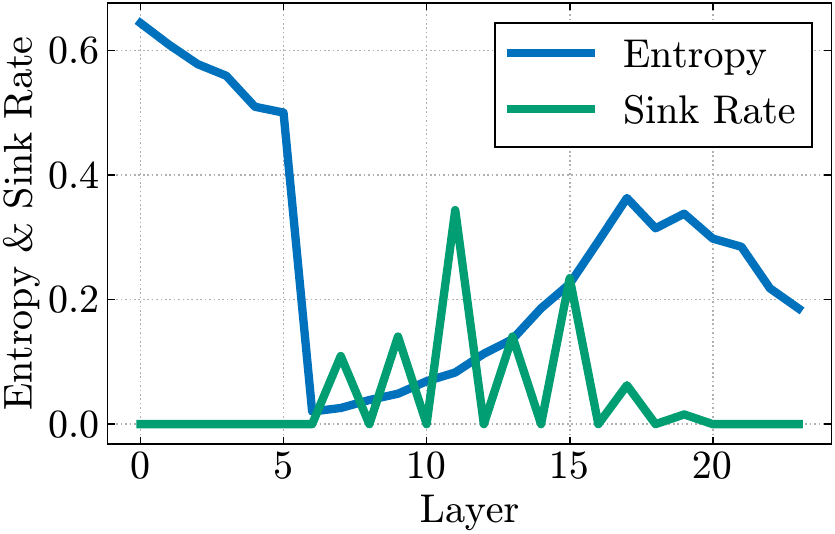}}%
  \quad
  \subfloat{\includegraphics[width=0.4\textwidth]{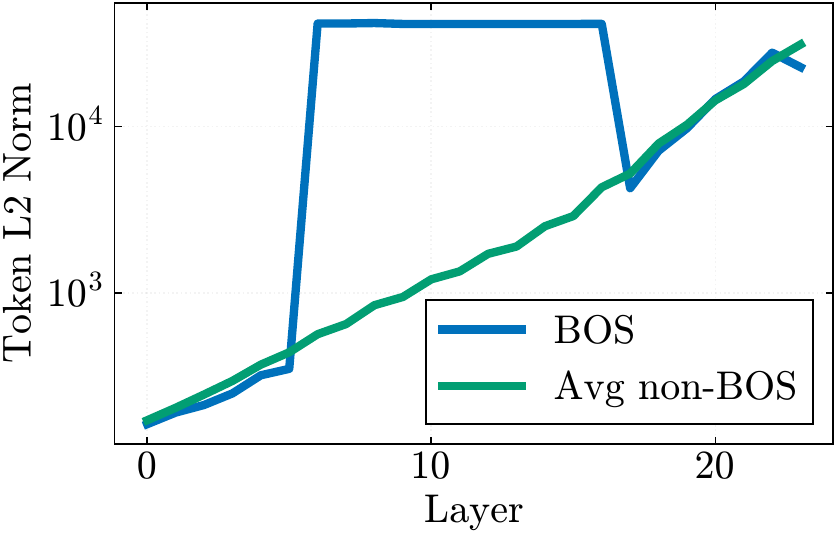}}
  \caption{Entropy, sinks, and massive activations for GPT OSS 20B.}
  \label{fig:gpt-oss}
\end{figure}

\paragraph{The particular case of Gemma 7B.} Even though Gemma 7B follows the same dynamics we have discussed in the chapter, how it achieves them is different from the rest. Token norms in Gemma 7B start very high; instead of increasing the \textsc{bos} norm to create a massive activation, Gemma 7B decreases the norms of the remaining tokens to create the disparity needed for compression, then re-equalizes by increasing their norms in late layers. We attribute the initially high norms to the embedding layer, as there are no other components that can account for it. We believe this is also why attention patterns in Gemma 7B look a bit different from the rest, with identity heads emerging both at the early and later layers. Figure \ref{fig:token-norms-gemma} illustrates this. Pre- means before each layer, while post- means after each layer.

\begin{figure}[h!]
    \vspace{-10pt}
    \centering
    \includegraphics[width=0.35\linewidth]{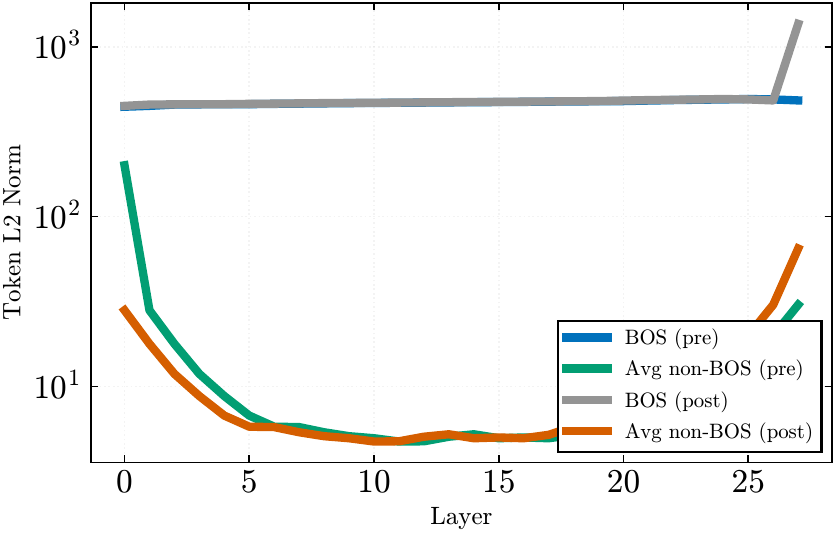}
    \caption{Token norms in Gemma 7B. The \textsc{bos} norm starts high since the beginning.}
    \label{fig:token-norms-gemma}
    \vspace{-10pt}
\end{figure}

\subsection{Mixing and Sink Detection Metrics}\label{app:mixing-metrics}
In this section we propose and study new metrics for quantifying mixing and ``sinkiness'' in attention heads, and provide further validation on the FineWeb-Edu experiment from Section \ref{sec:phase2}.
\paragraph{Mixing Score.} Let $\rmA$ be a lower triangular, row-stochastic attention matrix. We define the \textit{Mixing Score} as the average Shannon Entropy of each row $H_{\text{row}}=\frac{1}{T}\sum_{i=1}H(\rmA_{i,:}) = \frac{1}{T}\sum_{i=1}-\sum_{j=0}^i\alpha_{ij}\log \alpha_{ij}$. Since each row of $\rmA$ is the output to a $\softmax$, it is a probability distribution so the score is well-defined. This captures how broadly each token is attending to its preceding tokens. High values indicate the rows are close to the uniform distribution, suggesting broad mixing across tokens. Low values imply the rows are one-hot vectors, suggesting very localized mixing (sinks, identity or positional heads). Figure \ref{fig:colsum-rowent} (right) shows the Mixing Score in depth for a variety of models, showing how the mixing abruptly decreases from 0.7-0.75 to 0.3-0.4 after the first few layers. Bloom 1.7B resumes mixing in the last phase due to not being capable of producing positional patterns, as it is the only one without rotary positional embeddings \citep{barbero2025round}.

\begin{figure}[h!]
  \centering
  \includegraphics[width=\textwidth]{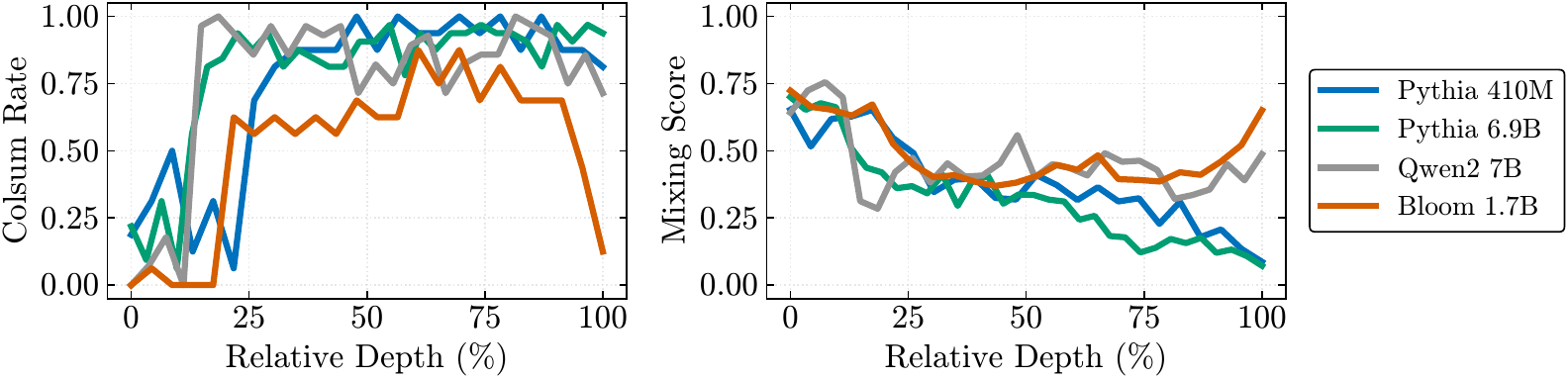}
  \caption{\textbf{Left:} ColSum Rate ($\tau = 0.3$) across depth for different models. \textbf{Right:} Mixing Score across depth for different models, averaging across heads per layer. The ColSum Rate increases with the massive activations, similar to the \textsc{bos} sink rate, while the Mixing Score abruptly decreases after the first few layers.}
  \label{fig:colsum-rowent}
\end{figure}

\paragraph{ColSum Concentration.}Similar to the mixing score, column sums $c'_j = \sum_i \rmA_{ij}$ capture how much attention is received by token $j$. We get a probability distribution by normalizing to $c_j = c'_j/\sum_i c'_i = c'_j / T$, since $\sum_i c'_i = \sum_i\sum_j \rmA_{ij} = T$ for $\rmA$ is row-stochastic. Denote by $H_{\text{col}} = -(\log T)^{-1}\sum c_j\log c_j \in [0,1]$ the normalized entropy of such distribution. For consistency, we define the \emph{ColSum Concentration} as $C = 1 - H_{\text{col}} \in [0,1]$. High $C$ means a few columns receive most mass (sink-like), low $C$ means diffuse reception.  When the sink is the \textsc{bos} token, the ColSum Concentration is tightly related to the \textsc{bos} sink score coupled: for a single \textsc{bos}-dominated head, ColSum increases monotonically with the \textsc{bos} score $c_{0}=\tfrac{1}{T}\sum_i A_{i0}$ and is lower-bounded by the case where the remaining mass is spread uniformly across the other $T\!-\!1$ columns. In that case, $C_{\min}(c_0)=1-\bigl[-(\log T)^{-1}\bigl(c_0\log c_0+(1-c_0)\log\!\bigl(\tfrac{1-c_0}{T-1}\bigr)\bigr)\bigr]$,
and any additional concentration on non-\textsc{bos} columns pushes $C$ above this curve. Similar to the sink rate, we can define a \textit{ColSum Rate} as the percentage of heads with ColSum Concentration above a certain threshold. Figure \ref{fig:colsum-rowent} (left) shows the ColSum Rate ($\tau = 0.3$) for different models across depth, imitating the Sink Rate's behavior. Moreover, scatter plots in Figure \ref{fig:bos-vs-colsum} show the ColSum Concentration as a function of the \textsc{bos} score for all heads in different models. As given by the bound, high \textsc{bos} score means high $C$, these are pure \textsc{bos} sinks. Points with high ColSum but low $c_0$ reveal heads that sink to non-\textsc{bos} tokens. In Pythia 410M, we observe such an outlier head, indicating a sink token different from \textsc{bos}.

\begin{figure}[h!]
    \centering
    \subfloat[\centering Qwen 2 7B]{{\includegraphics[width=0.2\textwidth]{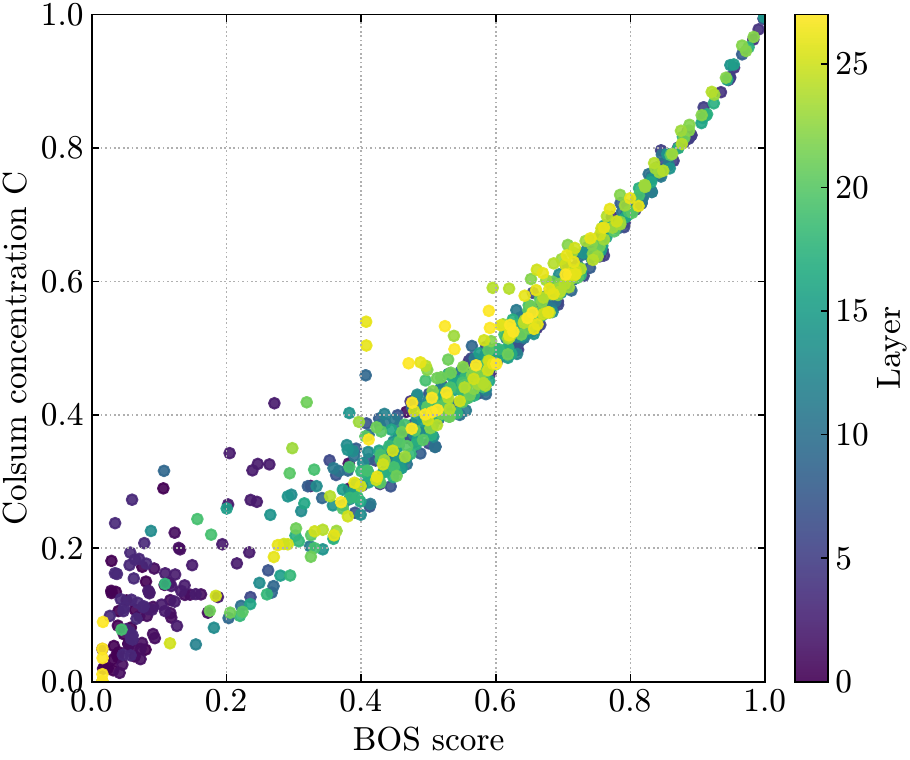} }}%
    \quad
    \subfloat[\centering Gemma 7B]{{\includegraphics[width=0.2\textwidth]{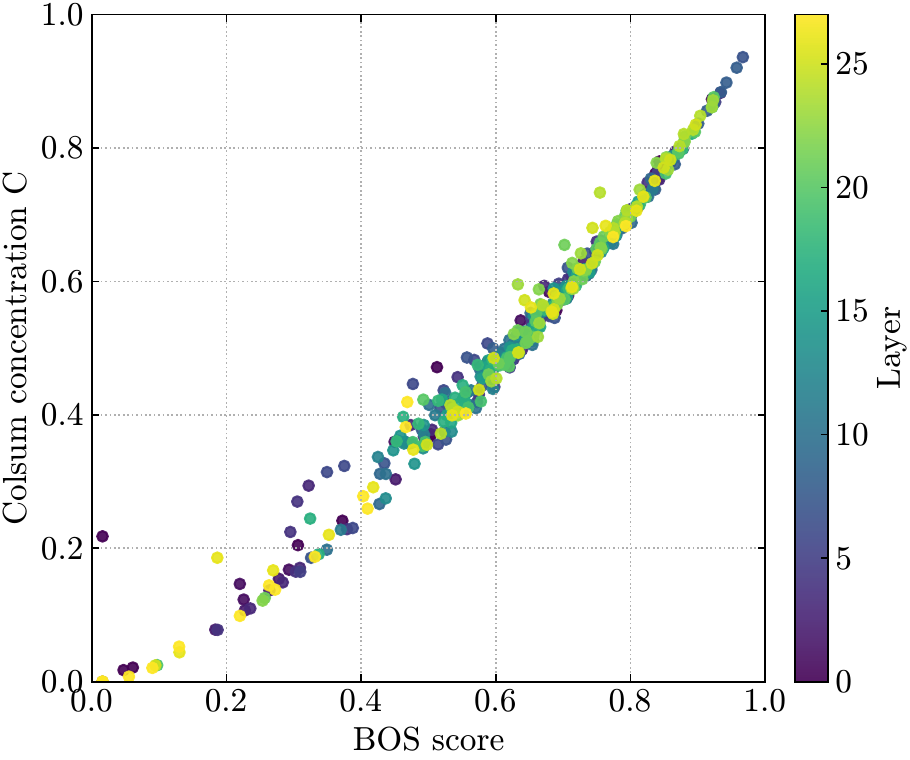} }}%
    \quad
    \subfloat[\centering LLaMA3 8B]{{\includegraphics[width=0.2\textwidth]{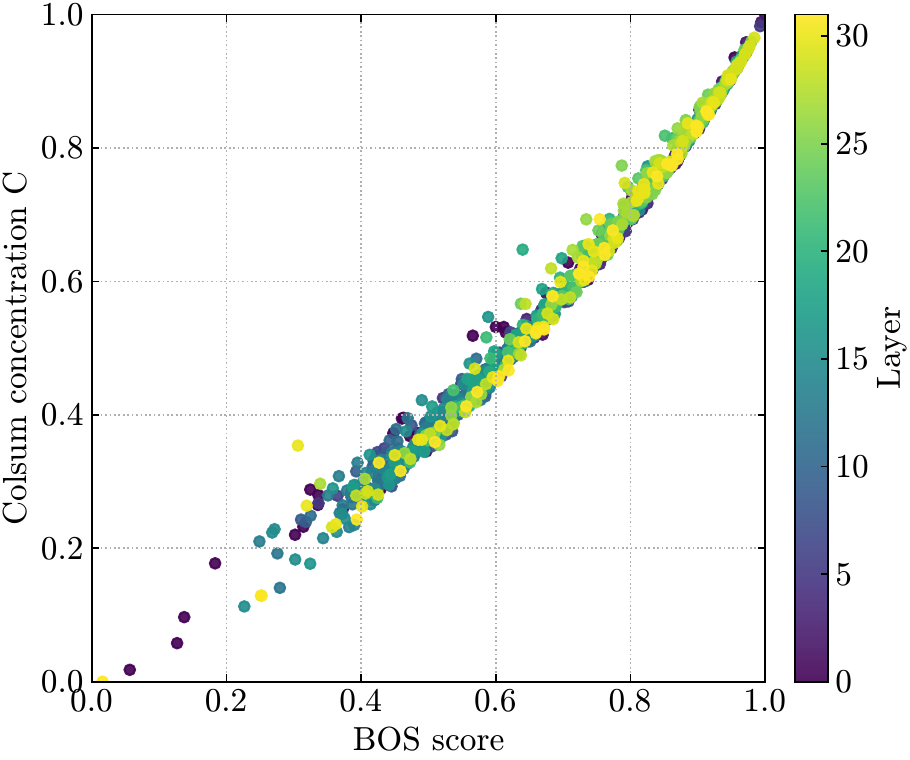} }}%
    \quad
    \subfloat[\centering Pythia 410m]{{\includegraphics[width=0.2\textwidth]{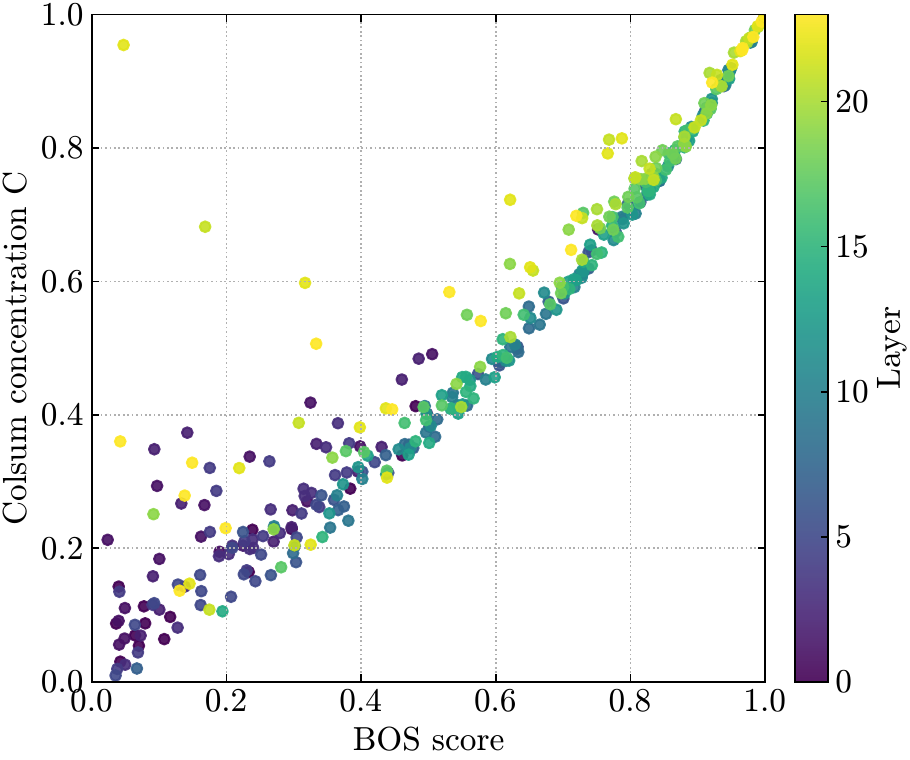} }}%
    \caption{\textsc{bos} score versus ColSum Concentration. The relationship with the \textsc{bos} sink score indicates ColSum concentration is a good, token-agnostic alternative for sink detection.}%
    \label{fig:bos-vs-colsum}%
    \vspace{-10pt}
\end{figure}

\paragraph{Limitations of Mixing Score and ColSum Concentration} 
Each of our diagnostics highlights one axis of behavior while missing others. The ColSum Concentration $C=1-H_{\text{col}}$ is effective at flagging sinks, where one column dominates, but it assigns zero score to identity heads and very low score to perfectly uniform heads. Conversely, the Average Row Entropy $H_{\text{row}}$ measures sparsity of rows, distinguishing diffuse mixing from one-hot attention, but it cannot differentiate which sharp pattern occurs: sinks, identities, or previous-token heads all have similarly low row entropy. Thus neither metric alone fully separates the regimes of interest. In principle, one could combine them into a scalar $\mathrm{Mix2D}(\alpha) = \alpha C + (1-\alpha) {H_{\text{row}}}\,,$
where, for a suitable choice of $\alpha$, sinks would map near~1, perfectly uniform heads near~0, and identities near~0.5. This would give a single axis interpolating between mixing, sinkness, and identity. In practice, however, we did not find this construction very informative and thus did not include it.

\paragraph{Sink-Versus-Identity Index.} In Phase 3, we observe attention patterns changing to more localized, sharp ones. Some of these patterns include identity-like heads, previous-token heads and hybrid sink-identity heads. We quantify this transition using the sink-versus-identity index, defined as $\text{SVI} = B/(B+D)$ where $B$ is \textsc{bos} attention and $D = \frac{1}{T}\sum_i \rmA_{ii}$ is diagonal attention. Therefore $B + D = \sum_{i=1} \rmA_{i0} + \rmA_{ii} \in [0,1]$.
Figure~\ref{fig:scatter-identities} plots each head as a 2D point $(SVI,B+D)$, with color corresponding to its layer. Early heads tend to have low $B+D$, indicating no attention is allocated to the \textsc{bos} token nor the identity. As depth progresses, heads tend to go toward high $B+D$ and high $SVI$, indicating strong sink presence. Moreover, the middle to late layers tend to also show identity patterns or sink-identity hybrid patterns.

\begin{figure}[h!]  
\centering \subfloat{{\includegraphics[width=0.21\textwidth]{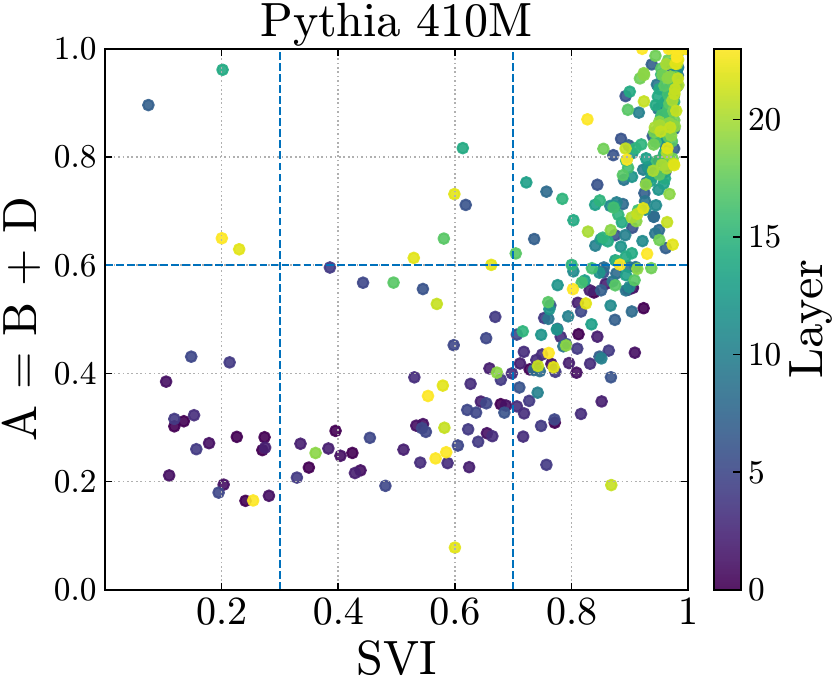} }} \quad 
\centering \subfloat{{\includegraphics[width=0.21\textwidth]{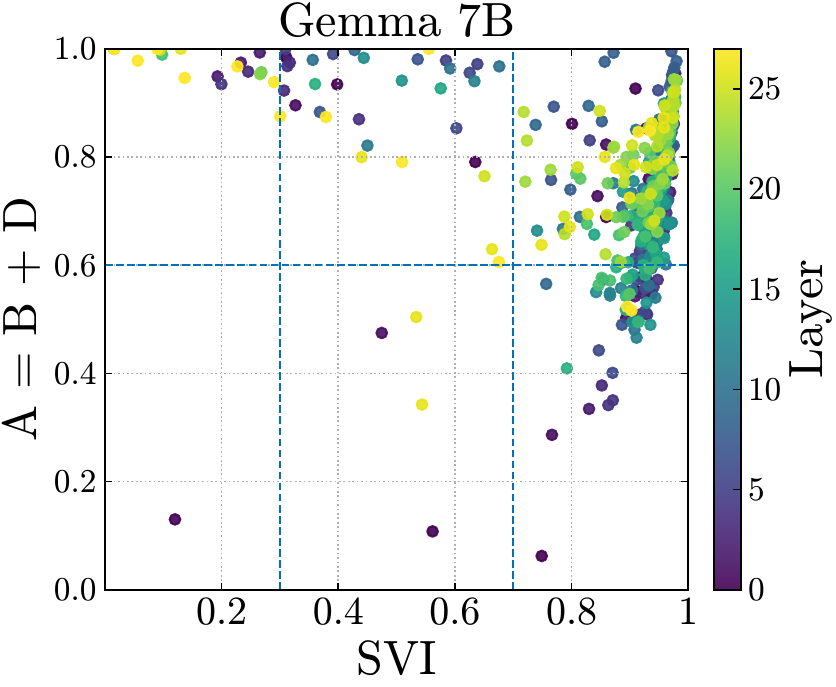} }} \quad 
\centering \subfloat{{\includegraphics[width=0.21\textwidth]{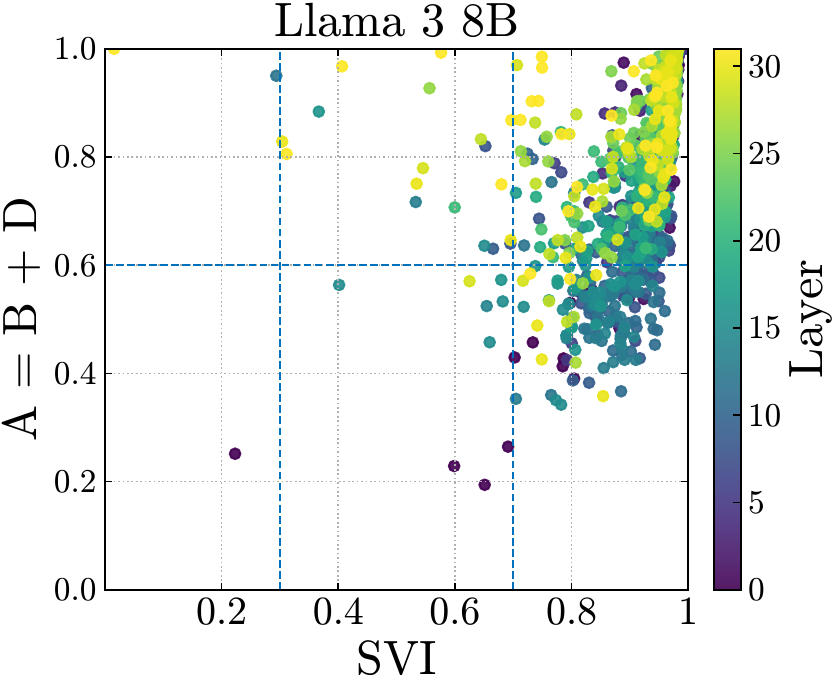} }} \quad 
\subfloat{{\includegraphics[width=0.21\textwidth]{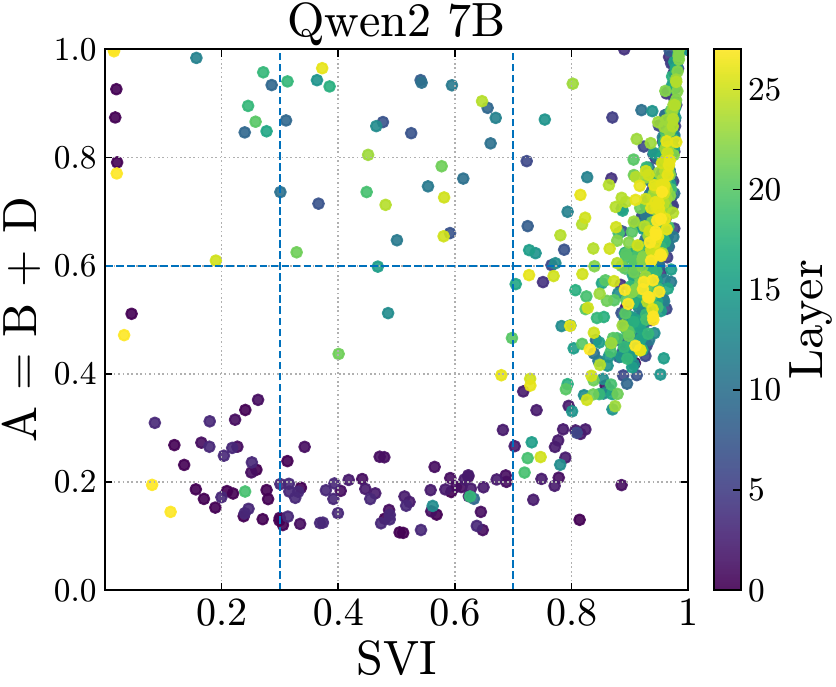} }} \vspace{-3pt} \caption{Attention received by the \textsc{bos} token and the diagonal for each head.} 
\label{fig:scatter-identities} 
\end{figure}

\begin{figure}[h!]
    \centering
    \subfloat{{\includegraphics[width=0.27\textwidth]{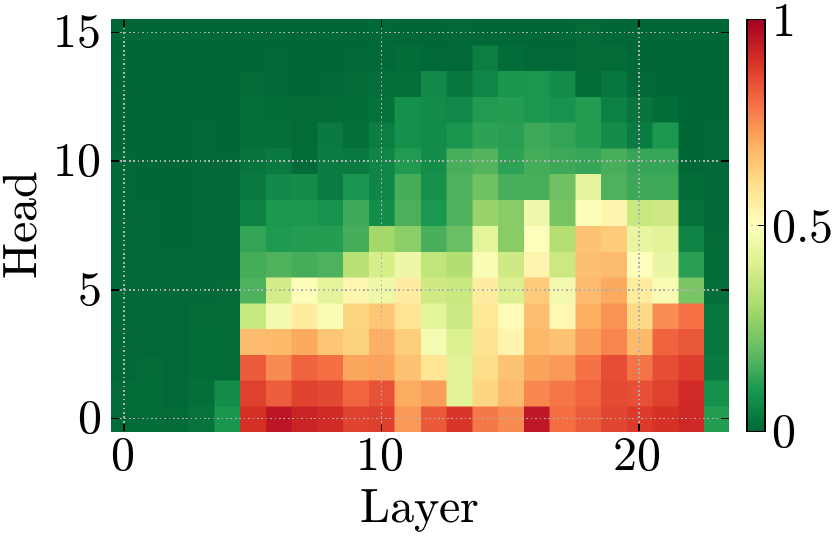} }}%
    \qquad
\subfloat{{\includegraphics[width=0.27\textwidth]{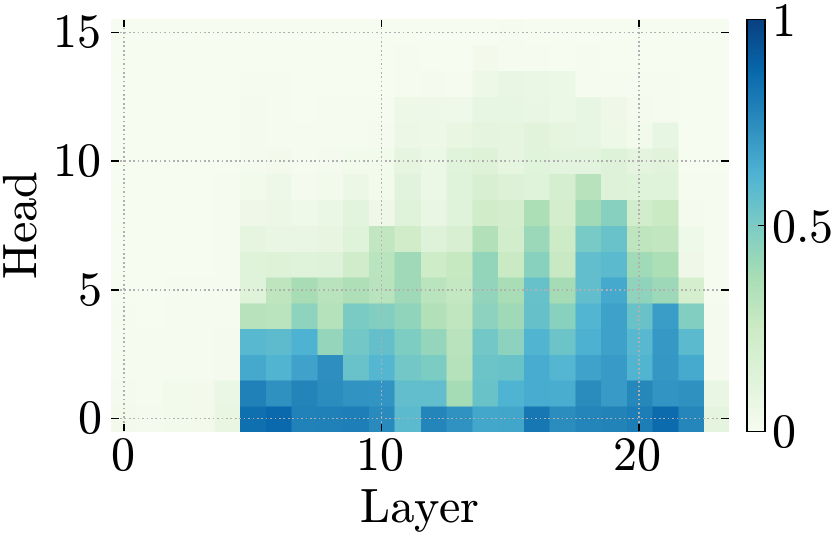} }}%
    \qquad
\subfloat{{\includegraphics[width=0.27\textwidth]{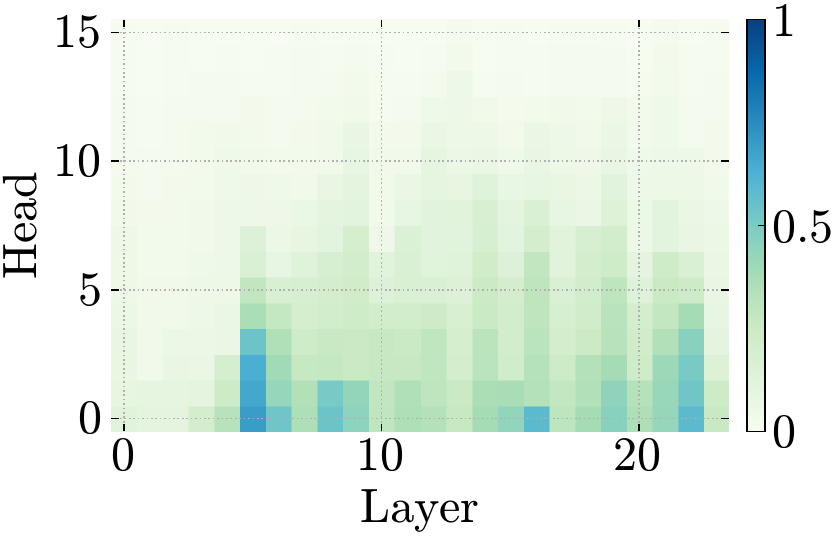} }}%
    \caption{\textbf{Left.} \textsc{bos} sink scores (top prompt, Bloom 1.7B). \textbf{Middle.} Top–bottom prompt difference in \textsc{bos} sink score. \textbf{Right.} Top–bottom prompt difference in mixing score.}%
    \label{fig:head-heatmaps-bloom}%   
\end{figure}
\vspace{-0.2cm}

\begin{figure}[h!]
    \centering
    \subfloat{{\includegraphics[width=0.27\textwidth]{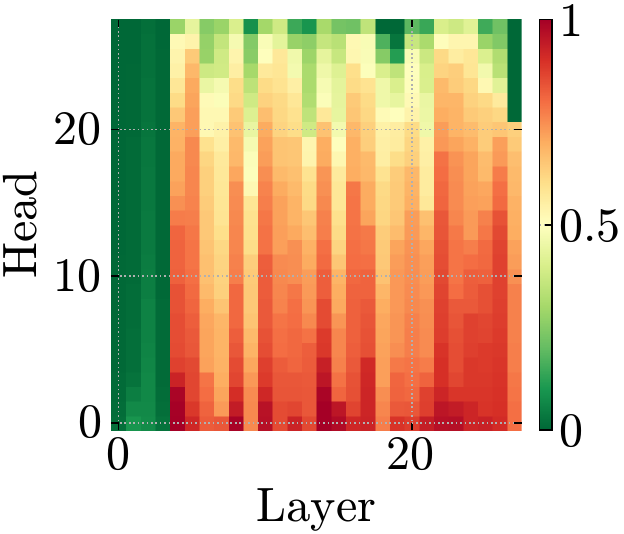} }}%
    \qquad
\subfloat{{\includegraphics[width=0.27\textwidth]{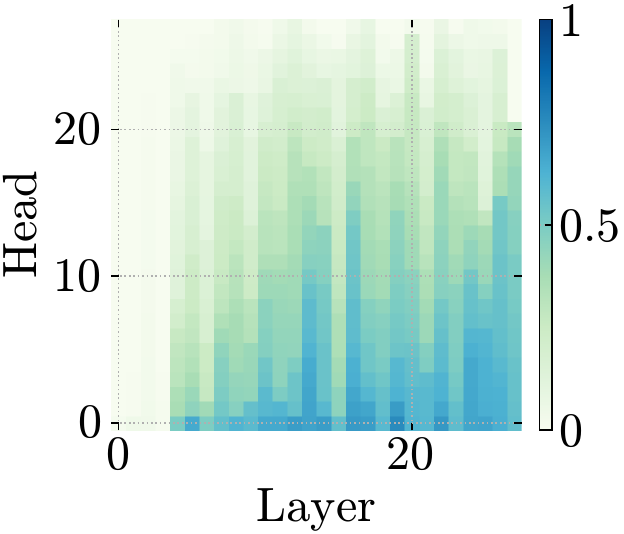} }}%
    \qquad
\subfloat{{\includegraphics[width=0.27\textwidth]{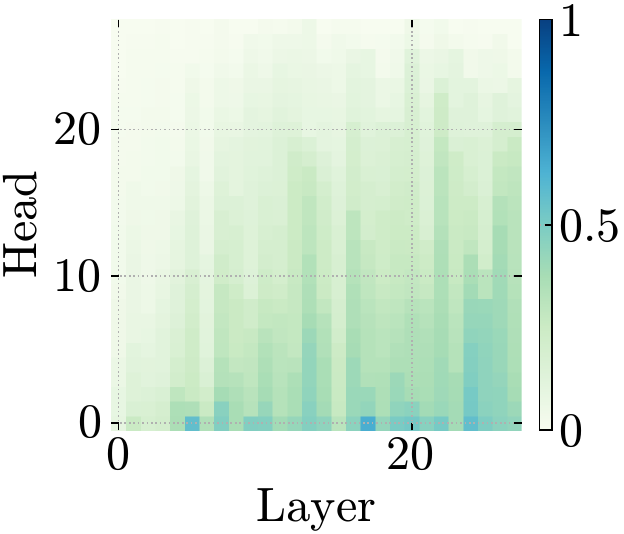} }}%
    \caption{\textbf{Left.} \textsc{bos} sink scores (top prompt, Qwen2 7B). \textbf{Middle.} Top–bottom prompt difference in \textsc{bos} sink score. \textbf{Right.} Top–bottom prompt difference in mixing score.}%
    \label{fig:head-heatmaps-qwen}%   
\end{figure}
\vspace{-0.2cm}

\paragraph{Further validation on FineWeb-Edu.} Figures \ref{fig:head-heatmaps-bloom} and \ref{fig:head-heatmaps-qwen} show the FineWeb-Edu experiment for Bloom 1.7B and Qwen2 7B models. The trend is clear: regardless of the input, the models do not allocate attention to the \textsc{bos} token until the massive activation emerges. The amount of sinks present in the middle layers is input-dependent, however the amount of mixing performed in the early layers is not.

\subsection{Additional results on downstream tasks}\label{app:additional-downstream}
In this section we provide more details on the experiments and results exposed in Section \ref{sec:downstream} of the main text. 

\begin{figure}[h!]
    \centering
    \subfloat{
    \includegraphics[width=0.45\textwidth]{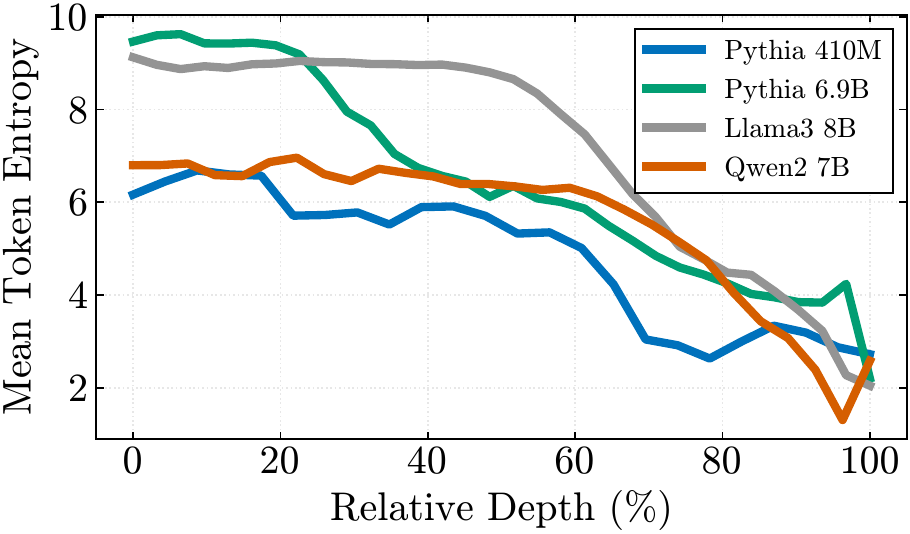}}
    
    \caption{\textbf{Language modeling requires full depth.} Entropy of the output distribution at each layer.}%
    \label{fig:entropy-ntp}%
\end{figure}

\paragraph{Generation Tasks.} We applied the LogitLens to WikiText-2 by passing each batch of tokenized blocks through a frozen backbone and, for every layer, projecting that layer's hidden states to vocabulary logits using the model's tied unembedding head. For each layer $\ell$, we computed the next-token cross entropy loss and perplexity (as shown in Figure \ref{fig:generation-vs-embedding} of the main text), as well as the mean token entropy of the softmaxed logits \citep{ali2025entropy}, as shown in Figure \ref{fig:entropy-ntp}. We take this entropy as a proxy of the model's confidence over the next token, and we also observe it decreases more rapidly towards Phase 3. In addition to next-token prediction, we extended the LogitLens evaluation to multiple-choice QA benchmarks (ARC Easy, ARC Challenge, HellaSwag, WinoGrande), where the model must select among a small set of candidate answers. For each layer, we applied the final layer norm and projected the embeddings with the tied unembedding head. We used LM Evaluation Harness to score, recording the accuracies. This allows us to compare how representations at different depths support generation-style (next-token) and selection-style (multiple-choice) reasoning. Figure \ref{fig:logitlens-downstream} shows MCQ performance remains relatively flat through the compression valley of Phase~2 and begins improving towards $\sim\!50\%$ of the network, underscoring that reasoning tasks require both compression and late-layer specialization. For completeness, we also ran the experiments with five-shot learning for each dataset. However, this only seemed to boost the final accuracies but did not influence the overall behavior observed. 

\begin{figure}[h!]
    \centering
    \includegraphics[width=\linewidth]{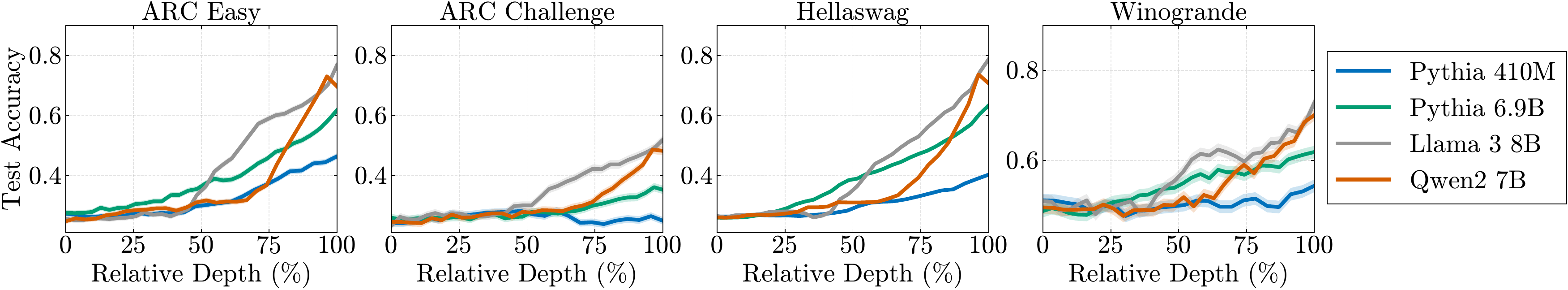}
    \caption{LogitLens accuracies on multiple choice question datasets.}
    \label{fig:logitlens-downstream}
\end{figure}

\vspace{-10pt}
\paragraph{TunedLens.} TunedLens \citep{belrose2023eliciting} is a refinement of the LogitLens technique that involves training a small affine transformation onto the vocabulary for each layer instead of using the model's own unembedding layer. To further validate our LogitLens experiment in the MCQ datasets, we also used the LM Evaluation Harness to run the TunedLens for Pythia 410M, Pythia 6.9B and LLaMA3 8B with the pretrained lenses available at \citet{belrose2023eliciting}. We include the results in Figure \ref{fig:tunedlens} for completeness, however we do not observe meaningful differences in the layerwise behavior with respect to the LogitLens.

\begin{figure}[h!]
    \centering
    \includegraphics[width=\linewidth]{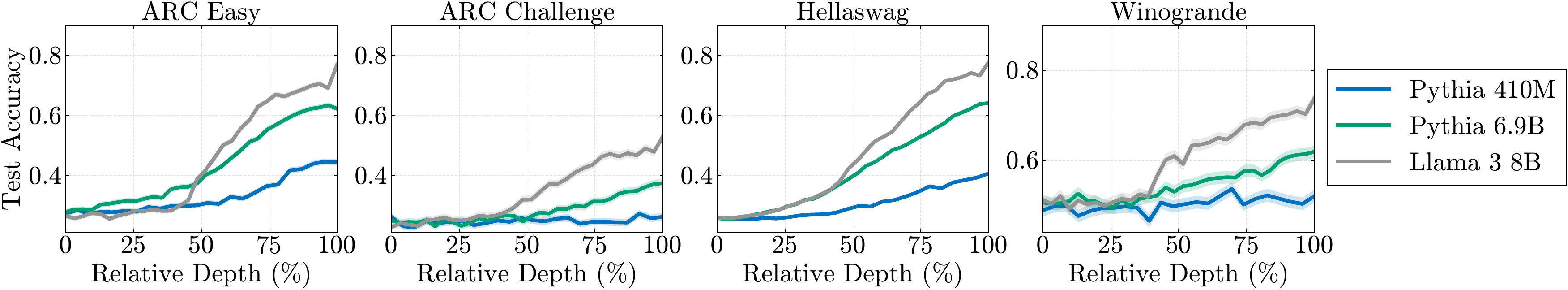}
    \caption{TunedLens accuracies on multiple choice question datasets.}
    \label{fig:tunedlens}
\end{figure}

\begin{figure}[h!]
    \centering
    \includegraphics[width=\linewidth]{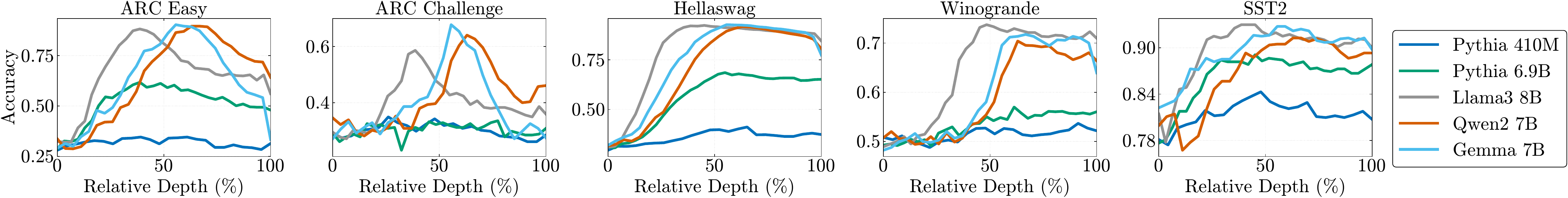}
    \caption{Linear probing validation accuracies.}%
    \label{fig:linear-probes}%
    \vspace{-10pt}
\end{figure}

\paragraph{Embedding Tasks.}
To further validate the results of Section \ref{sec:downstream} and the ones proposed by \citep{skean2025layer}, we run a standard linear probing experiment. Probes are trained independently per layer, with backbone parameters fixed, using a learning rate of $5\times 10^{-4}$ with PyTorch's default settings for Adam \citep{kingma2014adam}, no weight decay, one epoch, maximum length of 1024, batch sizes of 16-32 and a random fixed seed of 123. We train for backbones Pythia 410M, Pythia 6.9B, LLaMA3 8B, Qwen2 7B and Gemma 7B. Figure \ref{fig:linear-probes} shows the results. As discussed, across models and datasets, accuracy peaks in the middle layers. These results suggest that the linear features relevant for classification emerge transiently in the compressed middle representations, while the late layers are repurposed for generative refinement. Moreover, we run 32 MTEB tasks for the same models and report the average main score across tasks in Figure \ref{fig:mteb}.

\begin{figure}[h!]
    \centering
    \includegraphics[width=0.25\textwidth] {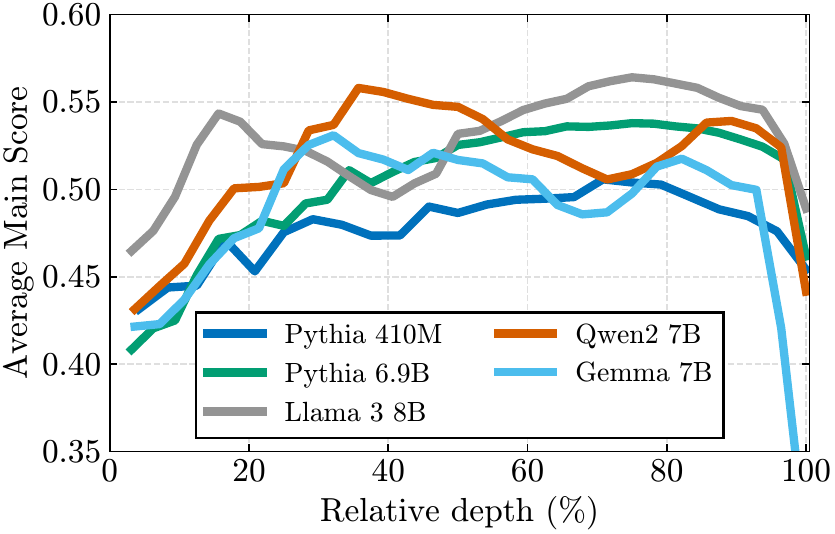}
    \vspace{-10pt}
    \caption{Average main score across 32 MTEB tasks.}
    \label{fig:mteb}
\end{figure}

\clearpage

\begin{figure}[h!]
    \centering
    \includegraphics[width=0.92\textwidth]{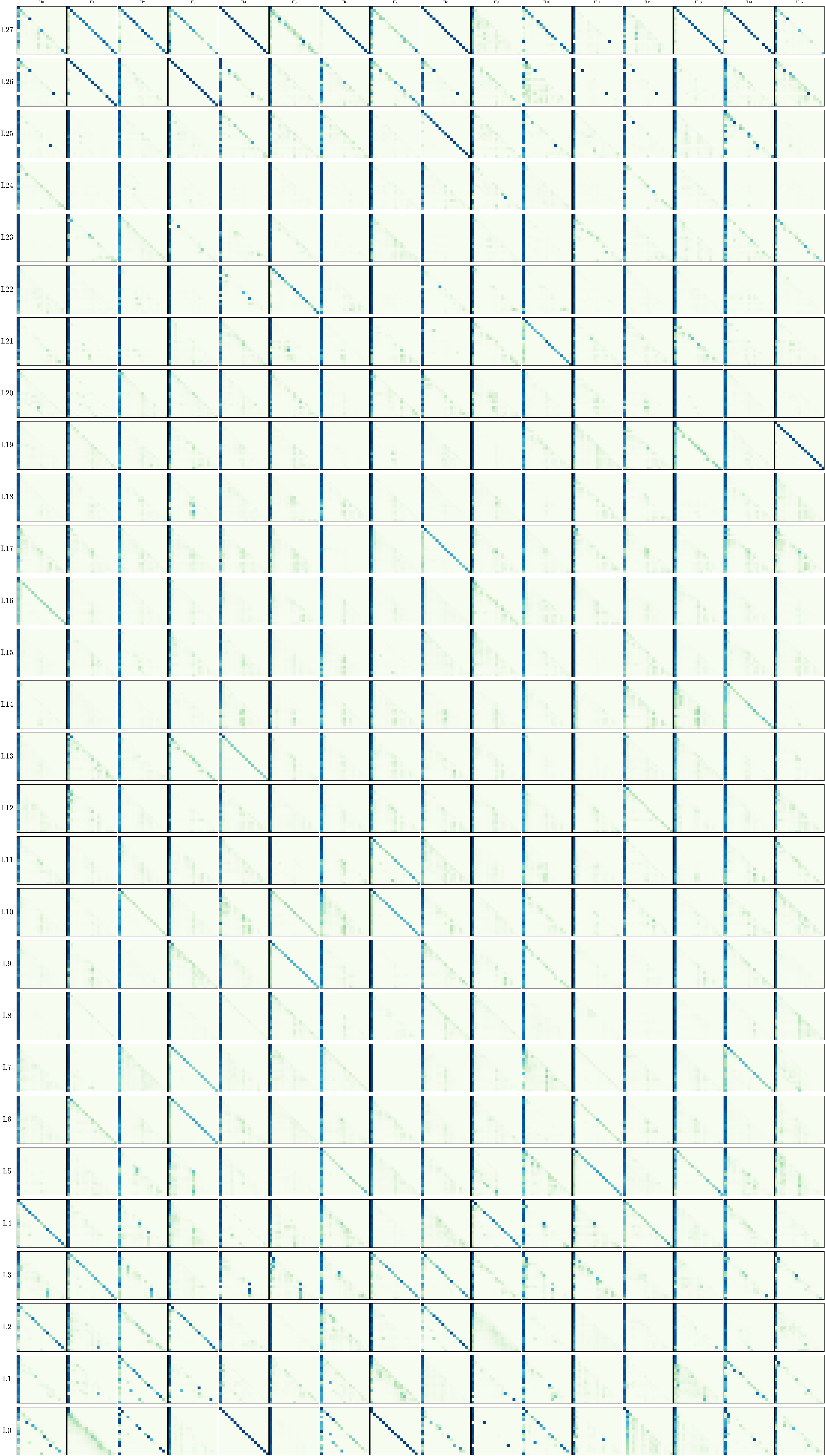} 
    \caption{Gemma 7B Heads for example prompt.}
    \label{fig:floorplan_gemma}
\end{figure}

\begin{figure}[h!]
    \centering
    \includegraphics[width=0.92\textwidth]{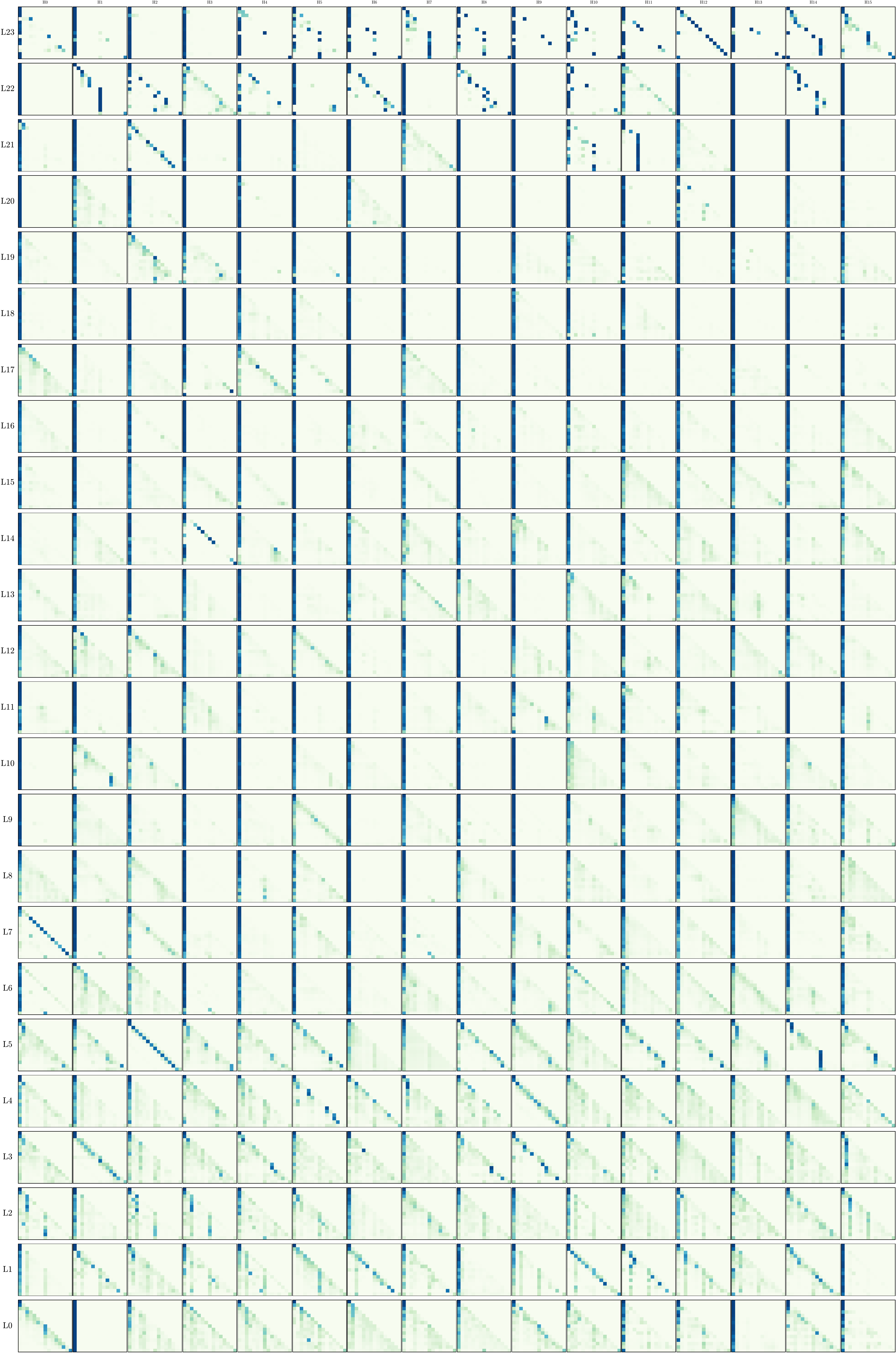} 
    \caption{Pythia 410m Heads for example prompt.}
    \label{fig:floorplan_pythia}
\end{figure}

\end{document}